\documentclass[11pt]{article}
\usepackage[preprint]{acl}
\usepackage{times,latexsym}
\usepackage[T1]{fontenc}
\usepackage[utf8]{inputenc}
\usepackage{microtype,inconsolata}
\usepackage{amsmath,amssymb,amsthm,booktabs,tabularx,longtable,graphicx}
\usepackage{algorithm,algpseudocode}
\usepackage{placeins,etoc}
\usepackage{enumitem}
\hypersetup{pdftitle={Beyond Expert Rankings: Contrast Calibration for Mixed-Precision MoE Quantization}}
\providecommand{\doi}[1]{doi: \href{https://doi.org/#1}{{\urlstyle{rm}\nolinkurl{#1}}}}
\newtheorem{theorem}{Theorem}[section]
\newtheorem{proposition}[theorem]{Proposition}

\newtheorem{corollary}[theorem]{Corollary}

\newcommand{\cI}{\mathcal I}

\newcommand{\method}{NatScale}
\DeclareRobustCommand{\appref}[1]{Appendix~\ref{#1}}
\DeclareRobustCommand{\secref}[1]{Section~\ref{#1}}
\DeclareRobustCommand{\equref}[1]{Equation~\eqref{#1}}

\DeclareRobustCommand{\tabref}[1]{Table~\ref{#1}}

\DeclareMathOperator*{\argmin}{arg\,min}
\title{Beyond Expert Rankings: Contrast Calibration for Mixed-Precision MoE Quantization}
\newcommand{\NatScaleAuthorBlock}{%
\begin{minipage}{0.99\textwidth}
\centering
{\normalsize\bfseries
Renjie Xie\textsuperscript{1,2}, Juncheng Yang\textsuperscript{2},
Aoting Hu\textsuperscript{3,2}, Yan Liang\textsuperscript{5}\\
Mingxi Zhang\textsuperscript{1,2}, Zheheng Hong\textsuperscript{4,2},
Wei Xu\textsuperscript{5}, Yanzhi Chen\textsuperscript{6,7}\par}
\vspace{0.35em}
{\normalfont\small
\textsuperscript{1}Nanjing University of Posts and Telecommunications\quad
\textsuperscript{2}Tylogi AI Lab / TAIL\\
\textsuperscript{3}Anhui University of Technology\quad
\textsuperscript{4}Shanghai Jiao Tong University\\
\textsuperscript{5}Southeast University\quad
\textsuperscript{6}Microsoft\quad
\textsuperscript{7}University of Cambridge\par}
\vspace{0.25em}
{\normalfont\footnotesize\ttfamily
renjie\_xie@njupt.edu.cn, juncheng\_yang@tylogi.com, aotinghu@ahut.edu.cn\\
liang\_yan@seu.edu.cn, b23080111@njupt.edu.cn, zh2227@sjtu.edu.cn\\
wxu@seu.edu.cn, yc514@cam.ac.uk\par}
\end{minipage}%
}
\author{\NatScaleAuthorBlock}
\hypersetup{pdfauthor={Renjie Xie; Juncheng Yang; Aoting Hu; Yan Liang; Mingxi Zhang; Zheheng Hong; Wei Xu; Yanzhi Chen},pdfsubject={Mixed-precision MoE quantization}}

\begin{document}
\etocdepthtag.toc{main}
\maketitle
\begin{abstract}
Mixture-of-experts (MoE) language models activate only a few experts per token, yet all expert weights must be stored. Mixed-precision quantization reduces this storage cost by assigning higher bit-widths to sensitive experts and lower bit-widths to resilient ones, but expert rankings alone do not determine how strongly to weight their errors. To address this gap, we propose NatScale, a contrast-calibrated allocation framework for mixed-precision MoE quantization. It uses a model-specific power transformation to calibrate gradient-based expert weights to an information-theoretic contrast target while preserving rankings. We jointly evaluate each expert's gate, up and down configurations to estimate its output error. We then combine these errors and calibrated weights with an existing prior based on weight spectra, variance and precision decay to select configurations under a storage budget. Across three models, NatScale achieves the lowest native-to-quantized KL divergence and the highest top-token agreement in all nine settings against five allocation baselines. It also yields the lowest perplexity in seven settings. These findings highlight the importance of objective-aware precision allocation.
\end{abstract}

\section{Introduction}
\label{sec:introduction}

Sparse Mixture-of-Experts (MoE) language models decouple parameter count from computational cost by routing each token to a small subset of experts~\citep{jiang2024mixtral,deepseek2024v2}. Although this routing limits per-token computation, all expert parameters must still be stored. Mixed-precision quantization offers an effective way to compress these models: under a strict memory budget, it allocates higher precision to sensitive weights and lower precision to robust ones.

Prior work uses gradients or curvature information to estimate
how quantization errors affect the model's loss,
providing a basis for sensitivity-aware quantization
\citep{kim2024squeezellm,kim2025guidedquant}. Choosing precisions under this budget requires two ingredients: estimating the quantization error of candidate configurations and evaluating the model's sensitivity to those errors. Existing techniques rank experts by importance using diverse heuristics, including routing statistics and reconstruction errors~\citep{huang2025mcmoe}, block sensitivity combined with activations~\citep{duanmu2025mxmoe}, dynamic importance updates during quantization~\citep{deng2026gemq}, and weight spectra priors~\citep{yang2026alphaq}.

While these signals establish expert rankings, a ranking alone does not determine memory allocation: it indicates which experts are more important, but not how strongly to weight their errors. Consider two sensitivity score vectors: $(2, 1)$ and $(100, 1)$. Both vectors rank the first expert ahead of the second, but assign different relative weights to their errors. We refer to the magnitude of this relative separation between sensitivity scores as \emph{contrast}. Under a global storage budget, contrast controls how strongly experts compete for storage. AlphaQ adapts the curvature of its spectral importance mapping to model-weight statistics~\citep{yang2026alphaq}. We study this weighting strength for gradient-based expert sensitivities.

In this work, we propose \textbf{NatScale}, an allocation framework designed to explicitly control sensitivity contrast in mixed-precision MoE quantization. NatScale applies a power transformation to gradient-based expert sensitivities, choosing the exponent so that normalized squared weights reach a prescribed Jeffreys (symmetric KL) divergence from a uniform distribution over positive-score experts. Because a power transformation is order-preserving, this calibration leaves the expert rankings untouched, and the prescribed Jeffreys divergence expresses each model's overall contrast on a common information-theoretic scale.

To score candidate configurations with these calibrated sensitivity weights, NatScale incorporates two additional design elements: (1) \emph{Joint Projection Error Measurement}. Because gate and up outputs interact before the down projection, separately measured errors may miss their joint effect on the output. We measure output error with all three projections quantized jointly on fixed original-model inputs. (2) \emph{Spectral Structural Priors}. The prior itself summarizes each block's weight structure through three factors: weight spectra, variance and precision decay. Multiplying this prior by the joint errors and calibrated sensitivities yields a score for each configuration, and these scores determine the selection under a global storage budget.

The contributions of our work are:
\begin{itemize}[leftmargin=*]
\item We propose \textbf{NatScale} for mixed-precision MoE quantization. Rather than using importance scores directly or applying fixed power transformations, NatScale calibrates expert sensitivity contrast to an explicit information-theoretic target, allowing for more effective allocation of quantization precision under a fixed  budget.
\item We show that expert rankings alone do not determine precision allocation: sensitivity contrast controls how strongly experts compete for storage. We further demonstrate that preserving native outputs and minimizing perplexity can favor different contrast levels.
\item We evaluate NatScale on WikiText-2 across Qwen3.6-35B-A3B, Qwen3.8-Flash-Next, and DeepSeek-V4-Flash-0731 at three storage targets, achieving the best native-output fidelity among the tested methods in all nine settings. We additionally evaluate FlashNext on AIME, GPQA, TruthfulQA, and LiveCodeBench.
\end{itemize}

\section{Related Work}
\label{sec:related-work}

\noindent\textbf{Post-Training Quantization.}
Post-training quantization (PTQ) reduces quantization error: GPTQ~\citep{frantar2023gptq} and AWQ~\citep{lin2024awq} use calibration activations, while QuIP\#~\citep{tseng2024quipsharp} and AQLM~\citep{egiazarian2024aqlm} employ codebooks for low-bit quality. Sensitivity-aware quantization methods like BRECQ~\citep{li2021brecq}, SqueezeLLM~\citep{kim2024squeezellm}, GuidedQuant~\citep{kim2025guidedquant}, and KronQ~\citep{lee2026kronq} compute importance via gradients or curvature, with DASH-Q~\citep{kim2026dashq} mitigating noisy curvature. For precision allocation, SliM-LLM~\citep{huang2025slimllm} assigns bits to groups according to their salience, whereas CoopQ~\citep{zhao2026coopq} uses layer interactions to allocate precision across layers. These dense-model formulations do not directly specify expert-level choices under sparse routing.

\noindent\textbf{MoE Quantization.}
Extending precision allocation to Mixture-of-Experts (MoE) models requires capturing expert importance and interactions. MC-MoE~\citep{huang2025mcmoe} scores experts via activation frequency, routing probabilities, and error. MODE~\citep{chen2026mode} estimates modality-specific importance, and GEMQ~\citep{deng2026gemq} updates sensitivity dynamically. MxMoE~\citep{duanmu2025mxmoe} and FRI-MxMoE~\citep{qi2026frimxmoe} allocate block-level precision using reconstruction costs or fuzzy interpolation, while AlphaQ~\citep{yang2026alphaq} uses weight spectra and adapts the curvature of its importance mapping to model statistics. Recent work refines granularity: SFMoE~\citep{qi2027sfmoe} models cooperation, CAMERA~\citep{xu2026camera} constructs micro-experts, and Q-Strata~\citep{lee2026qstrata} uses model-level objectives. NatScale calibrates gradient-based expert weights to a prescribed Jeffreys divergence. It jointly evaluates expert configurations, combining output errors with calibrated weights and AlphaQ's structural factor, trading coordinate-level gradient alignment for global tractability.

\noindent\textbf{Calibration and Robustness.}
Compression performance also depends on calibration data and robustness. Dataset selection critically impacts quality~\citep{williams2024calibration}, including self-generated text~\citep{williams2025selfcalibration}, sequence length~\citep{oh2025fixedlength}, and language diversity~\citep{chimoto2026multilingual}. In multimodal models, MBQ~\citep{li2025mbq} uses output gradients to balance reconstruction across vision and language tokens. For MoE models, MoEQuant~\citep{chen2025moequant} improves calibration coverage through expert-balanced sampling and uses sample--expert affinities during quantization. Related work also addresses differences across data groups: Group DRO~\citep{sagawa2020groupdro} optimizes worst-group loss, while FairQuant~\citep{woergaard2026fairquant} uses group-conditioned importance and fairness regularization for mixed-precision quantization. NatScale averages frequency-weighted RMS errors across folds. While its contrast target regulates weight separation, it remains dependent on calibration coverage and distribution shifts.

\begin{figure*}[!t]
\centering
\includegraphics[width=\linewidth]{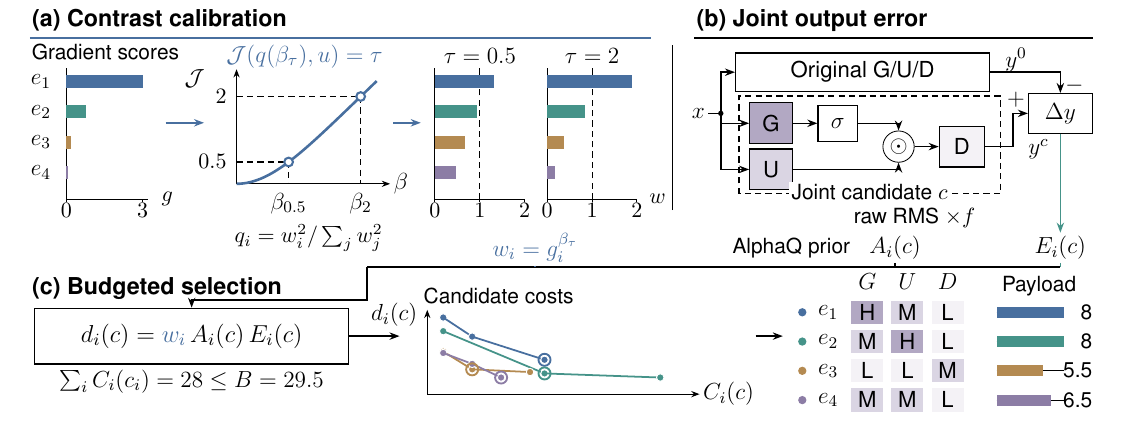}
\caption{\textbf{Overview of NatScale}. \textbf{(a) Contrast Calibration:} Maps gradient scores $g_i$ via a power transformation $w_i = g_i^{\beta_\tau}$, where the exponent $\beta_\tau$ is calibrated using a symmetric KL divergence target $\tau$ to control expert storage competition. \textbf{(b) Joint Error Estimation:} Evaluates quantization distortion by jointly quantizing gate, up, and down projections over replay inputs while incorporating structural priors. \textbf{(c) Budgeted Allocation:} Synthesizes calibrated weights, structural priors, and joint errors into expert costs, solving the global storage allocation by linear-programming (LP) relaxation and feasible rounding under a storage budget $B$.}
\label{fig:natscale-overview}
\end{figure*}

\section{Methodology}
\label{sec:method}

\subsection{Overall Framework}
\label{sec:overview}

We formulate mixed-precision quantization for sparse Mixture-of-Experts (MoE) models as a constrained optimization problem~\citep{jiang2024mixtral, deepseek2024v2}. Let $i = (\ell, e) \in \cI$ index expert $e$ in layer $\ell$, and let $p \in \{G, U, D\}$ denote its gate, up, or down projections. For each expert, a local configuration $c_i = (c_{i,G}, c_{i,U}, c_{i,D}) \in \mathcal C_i$ specifies an encoding format and quantized weights. Given an encoded-weight payload $C_i(c_i)$ in exact bytes and a global storage budget $B$, our goal is to find an allocation $c = (c_i)_{i \in \cI}$ that minimizes an empirical proxy for total model perturbation:
\begin{equation}
  \begin{aligned}
  \min_{\{c_i \in \mathcal C_i\}} \quad & \sum_{i \in \cI} d_i(c_i) \\
  \text{subject to} \quad & \sum_{i \in \cI} C_i(c_i) \le B,
  \end{aligned}
  \label{ns:allocation}
\end{equation}
where $d_i(c_i)$ is the expert's proxy cost and non-routed weights retain their native precision.

As illustrated in Figure~\ref{fig:natscale-overview}, \textbf{NatScale} achieves this allocation through a principled three-stage pipeline. First, it derives gradient-based expert sensitivities and calibrates their relative separation, termed \emph{contrast}, using an information-theoretic symmetric KL target. Second, it evaluates candidate distortions by jointly quantizing all three projections while incorporating a structural weight prior~\citep{yang2026alphaq}. Finally, it synthesizes these signals into composite expert costs and solves the global storage allocation using LP relaxation and feasible rounding.

\subsection{Step 1: Sensitivity Calibration}
\label{ns:contrast}

To determine how strongly sensitive experts compete for storage, we measure the response of prediction loss to expert output changes. Let $v_{i,t} = \partial \mathcal L_{\mathrm{CE}} / \partial y_{i,t}$ denote the gradient of summed next-token cross-entropy with respect to expert output $y_{i,t}$ at token position $t$. Over calibration folds, we compute the normalized gradient second moment $g_i$:
\begin{equation}
  \begin{aligned}
    G_i &= \frac{\sum_s \sum_{t \in \mathcal T^{\mathrm{orig}}_{s,i}} \|v_{i,t}\|_2^2}{H_i \max\{\sum_s n^{\mathrm{orig}}_{s,i},1\}}, \\
    g_i &= \frac{G_i}{|\cI|^{-1}\sum_{j\in\cI}G_j},
  \end{aligned}
  \label{ns:gradient}
\end{equation}
where global mean normalization preserves relative inter-layer sensitivities. Here $H_i$ is the expert output width, $\mathcal T^{\mathrm{orig}}_{s,i}$ contains its routed token events in native calibration fold $s$, and $n^{\mathrm{orig}}_{s,i}=|\mathcal T^{\mathrm{orig}}_{s,i}|$.

Rather than relying on raw scores or heuristic powers, \textbf{NatScale} applies a power transformation $w_i = g_i^{\beta_\tau}$ on active experts $\cI_+ = \{i : g_i > 0\}$ and determines $\beta_\tau$ from a target symmetric KL divergence $\tau>0$ against a uniform distribution. For a candidate exponent $\beta\ge0$, the normalized squared weights and uniform reference are
\begingroup
\begin{equation}
  q_i(\beta)=\frac{g_i^{2\beta}}{\sum_{j\in\cI_+}g_j^{2\beta}},
  \qquad u_i=\frac{1}{|\cI_+|}.
  \label{ns:squared-weights}
\end{equation}
\endgroup
When the positive scores are nonconstant, each target $\tau>0$ determines a unique exponent $\beta_\tau>0$ through
\begin{equation}
  \begin{aligned}
    &\mathcal J(q(\beta_\tau), u) \\ = &D_{\mathrm{KL}}(q(\beta_\tau) \,\|\, u) + D_{\mathrm{KL}}(u \,\|\, q(\beta_\tau)) \\
    = &\tau,
  \end{aligned}
  \label{ns:calibration}
\end{equation}
where target contrast $\tau$ is measured in \emph{nats}. Zero-gradient experts retain zero weight; calibration rejects empty support and constant positive scores. This formulation yields calibrated weights $w_i = g_i^{\beta_\tau}$ that strictly preserve expert rank ordering while setting the prescribed sensitivity contrast. We adopt $\tau = 1$~nat as our maximum-entropy reference default in evaluation.

\subsection{Step 2: Joint Error Estimation}
\label{ns:distortion}

An expert's gate and up projections nonlinearly interact before the down projection through gating operator $\Phi_i$ (e.g., $\operatorname{SiLU}(a) \odot b$). We retain their error interactions by quantizing all three projections jointly on fixed replay inputs $x$:
\begin{equation}
  y_i^c(x) = W_{i,D}^{c_D} \Phi_i\!\left(W_{i,G}^{c_G}x, W_{i,U}^{c_U}x\right).
  \label{ns:expert-output}
\end{equation}
Letting $\Delta y_{i,t}(c) = y_i^c(x_{i,t}) - y_i^0(x_{i,t})$ denote the output residual relative to native outputs, we define the exposure-weighted RMS output error $E_i(c)$ using activation frequencies $f_{s,i}$ across folds:
\begin{equation}
  \begin{aligned}
    M_{s,i}(c) &= \frac{\sum_{t \in \mathcal T^{\mathrm{rep}}_{s,i}} \|\Delta y_{i,t}(c)\|_2^2}{\max\{n^{\mathrm{rep}}_{s,i},1\}H_i}, \\
    E_i(c) &= \sum_s \pi_s f_{s,i}\sqrt{M_{s,i}(c)}.
  \end{aligned}
  \label{ns:local-error}
\end{equation}
Here $\mathcal T^{\mathrm{rep}}_{s,i}$ is the replay event set and $n^{\mathrm{rep}}_{s,i}=|\mathcal T^{\mathrm{rep}}_{s,i}|$. Exposure is $f_{s,i}=n^{\mathrm{orig}}_{s,i}/T_s$, where $T_s$ is the token count in fold $s$; the six folds receive equal weights $\pi_s=1/6$.

To incorporate weight structure, we augment the output error with a structural prior $A_i(c)$ following AlphaQ~\citep{yang2026alphaq}, which incorporates spectral tail indices $\alpha_{i,p}$, parameter variance $\operatorname{Var}(W_{i,p})$, and effective bit precision decay:
\begingroup
\begin{equation}
  \begin{aligned}
    A_i(c)&=\frac13\sum_{p\in\{G,U,D\}}A_{i,p}(c_p),\\
    A_{i,p}(c_p)&=\chi_{i,p}(c_p)\frac{\widetilde\alpha}{\alpha_{i,p}}\\
    &\qquad\cdot\operatorname{Var}(W_{i,p})\,2^{-2b_{i,p}(c_p)}.
  \end{aligned}
  \label{ns:structural-factor}
\end{equation}
\endgroup
Here $\widetilde\alpha$ is the median spectral tail index across all target projections, and $b_{i,p}(c_p)$ uses nominal bits for integer formats and effective bits per weight for vector formats. The indicator $\chi_{i,p}(c_p)$ is zero when that projection retains DeepSeek-V4-Flash-0731's native MXFP4 encoding and one otherwise.

\subsection{Step 3: Budget Allocation}
\label{sec:allocation}

With the calibrated sensitivity weights $w_i$, structural priors $A_i(c)$, and joint output distortions $E_i(c)$ established, we formulate the composite expert proxy cost for configuration $c_i$ as:
\begin{equation}
  \boxed{d_i(c_i) = w_i \, A_i(c_i) \, E_i(c_i)}.
  \label{ns:complete-cost}
\end{equation}
Under Lagrangian relaxation, the calibrated weight $w_i$ rescales the global byte price, making higher-precision configurations more attractive for critical experts. We construct each expert's lower convex hull over cost-payload trade-offs and solve the LP relaxation, then round its solution to a feasible allocation. The gap to the LP optimum bounds the returned allocation's optimization error for this proxy (Appendix~\ref{ns:implementation}).

\paragraph{Implementation Note.} Re-calibrating the contrast target $\tau$ or adjusting the global budget $B$ only requires updating $w_i$ and re-running the solver over pre-cached distortion terms $E_i(c)$, requiring no additional model forward or backward passes. Further technical proofs and implementation details are provided in Appendices~\ref{ns:appendix-calibration}, \ref{ns:appendix-residual}, and~\ref{ns:implementation}.

\section{Theoretical Analysis}
\label{sec:theory}
\label{ns:theory}

To substantiate NatScale's design, this section provides theoretical answers to two fundamental questions: 

\noindent\textbf{1. Why is contrast calibration well-posed, and why choose $\tau = 1$~nat?} Raw sensitivity distributions can differ in their overall contrast. NatScale puts this contrast on a common information scale by mapping scores through a power transformation to a prescribed symmetric KL divergence $\tau$.

For any positive, non-constant gradient scores $g_i$, the symmetric KL divergence $\mathcal J(q(\beta), u)$ is a monotonically increasing and continuous function of exponent $\beta\ge0$ mapping from zero to infinity. Consequently, each target contrast $\tau > 0$ uniquely determines a single positive exponent $\beta_\tau$. Changing $\tau$ smoothly scales the relative separation (contrast) between expert weights without altering their rank ordering.

To set a default without tuning the contrast target, we map a two-parameter Weibull fit of amplitude scores $r_i = \sqrt{g_i}$ to a reference distribution whose squared amplitudes have maximum entropy at unit mean. Combining this map with the ordinary maximum-likelihood equation gives the following finite-sample identity:

\begin{corollary}[Weibull Reference Target]
\label{ns:weibull-corollary}
For any positive, nonconstant amplitude scores $r_i$, the unweighted two-parameter Weibull maximum-likelihood fit satisfies $\beta_1 = \widehat{k} / 4$, yielding a calibrated contrast of exactly $\mathcal J(q(\beta_1), u) = 1$~nat.
\end{corollary}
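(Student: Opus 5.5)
The plan is to substitute the Weibull maximum-likelihood equations directly into the symmetric KL divergence. The calibration target then turns out to coincide, term for term, with the shape score equation. Nothing more than the definitions in \equref{ns:squared-weights} and \equref{ns:calibration}, together with the well-posedness statement above, should be needed.

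First, I write down the unweighted two-parameter Weibull log-likelihood for $r_i>0$, $i\in\cI_+$, with $n=|\cI_+|$, shape $k$ and scale $\lambda$. Profiling out $\lambda$ gives $\widehat\lambda^{k}=n^{-1}\sum_i r_i^{k}$. Substituting back leaves the standard shape equation
\begin{equation*}
\frac{1}{\widehat k}+\frac1n\sum_{i}\ln r_i-\frac{\sum_i r_i^{\widehat k}\ln r_i}{\sum_j r_j^{\widehat k}}=0 .
\end{equation*}
I will cite or briefly verify that, for positive nonconstant data, this equation has a unique root $\widehat k>0$. The argument is that the weighted-mean term $\sum_i r_i^{k}\ln r_i/\sum_j r_j^{k}$ is strictly increasing in $k$ because its derivative is a variance of $\ln r$, which is positive since the $r_i$ are nonconstant. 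Meanwhile $1/k$ decreases from $+\infty$ to $0$, and the limits at $0$ and $\infty$ give a sign change. So the MLE exists and is unique.

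Second, I compute $\mathcal J(q,u)$ in closed form. Since $\sum_i(q_i-u_i)=0$, one has $\mathcal J(q,u)=\sum_i (q_i-u_i)(\ln q_i-\ln u_i)=\sum_i (q_i-\tfrac1n)\ln q_i$. Because $g_i=r_i^2$, \equref{ns:squared-weights} gives $q_i(\beta)\propto r_i^{4\beta}$. Setting $\beta=\widehat k/4$ yields $q_i=r_i^{\widehat k}/\sum_j r_j^{\widehat k}$. The normalizing constant again cancels against $\sum_i(q_i-\tfrac1n)=0$, leaving
\begin{equation*}
\mathcal J\bigl(q(\widehat k/4),u\bigr)=\widehat k\Bigl(\sum_i q_i\ln r_i-\frac1n\sum_i\ln r_i\Bigr).
\end{equation*}
By the shape equation, the bracket equals exactly $1/\widehat k$, so $\mathcal J=1$ nat.

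Finally, the earlier result that $\mathcal J(q(\beta),u)$ is continuous and strictly increasing from $0$ to $\infty$ implies that $\tau=1$ has a unique solution $\beta_1$. Hence $\beta_1=\widehat k/4$. The only step requiring care is the existence and uniqueness of the shape MLE, which is needed so that $\widehat k$ is well defined. Everything else is an algebraic identity, and the main point of the proof is to notice that both $\mathcal J$ and the score equation reduce to the same covariance-like quantity $\sum_i (q_i-\tfrac1n)\ln r_i$.
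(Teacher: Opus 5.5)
Your proof is correct and follows essentially the paper's route. You profile out the Weibull scale, note that $q(\widehat k/4)$ is the softmax of $\widehat k\log r_i$, and match the Jeffreys divergence term for term against the shape score equation. The differences are minor: you compute $\mathcal J$ directly as $\sum_i(q_i-u_i)\log q_i$ where the paper invokes its identity $\mathcal J(q^{(t)},u)=t\Delta(t)$, and your monotone-score-plus-limits argument gives existence of the shape root as well as the uniqueness the paper gets from strict concavity of the profile likelihood.
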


The derivation in Appendix~\ref{ns:appendix-calibration} applies to any positive nonconstant sample, without assuming a Weibull data distribution. Thus, $\tau = 1$~nat acts as an information-theoretic anchor, while task-specific performance determines whether a tighter or wider contrast is preferred.

\noindent\textbf{2. How do calibrated weights regulate memory allocation?} To understand how calibrated weights govern memory allocation during optimization, let $L_i = A_i E_i$ represent the combined local distortion and structural prior cost for expert $i$. Under Lagrangian relaxation with a global byte price $\lambda \ge 0$, the expert-level selection objective minimizes:
\begin{equation}
  \begin{aligned}
    \min_{c_i \in \mathcal C_i} \quad w_i L_i(c_i) + \lambda C_i(c_i).
  \end{aligned}
\end{equation}
Dividing this objective by the calibrated weight $w_i > 0$ yields an equivalent minimization problem:
\begin{equation}
  \begin{aligned}
    \min_{c_i \in \mathcal C_i} \quad L_i(c_i) + \left(\frac{\lambda}{w_i}\right) C_i(c_i).
  \end{aligned}
  \label{ns:lagrangian-price}
\end{equation}
Equation~\ref{ns:lagrangian-price} gives contrast a direct allocation interpretation: larger $w_i$ lowers the effective byte price $\lambda/w_i$ for expert $i$.

When weights are close, their effective byte prices are similar; wider contrast separates these prices and can concentrate higher-precision storage on more heavily weighted experts. This relation explains how contrast changes storage competition, but it does not determine which target best preserves model outputs.

We therefore use $\tau=1$~nat as a statistical reference and assess its effect empirically. Separately, the gap between the returned feasible allocation and the LP optimum gives an upper bound on its optimization error for the fixed empirical proxy (Appendix~\ref{ns:implementation}).

\section{Experiments}
\label{sec:experiments}

We evaluate \method{} on three MoE language models under three routed-weight storage budgets. Our experiments address four questions: (i) how well does contrast-calibrated allocation preserve model predictions relative to existing allocation strategies; (ii) how does the contrast target affect quality when expert rankings are held fixed; (iii) which score components contribute to performance; and (iv) whether the benefits extend to downstream tasks and whole-model compression.

\begin{table*}[t]
\caption{\textbf{WikiText-2 results at three nominal routed-weight budgets.} Lower perplexity (PPL) and native-to-quantized KL divergence (KLD) are better; higher native top-token agreement (Top1, \%) is better. Bold indicates the best result in each column. All methods use the same candidate quantized weights, but retain method-specific allocation constraints and payload/header accounting (\appref{app:baselines}). NatScale uses the fixed contrast target $\tau=1$ nat.}
\label{tab:baselines-main}
\centering\scriptsize
\begingroup
\setlength{\tabcolsep}{3pt}
\begin{tabular*}{\linewidth}{@{\extracolsep{\fill}}l*{9}{r}@{}}
\toprule
\textbf{Qwen3.6-35B-A3B} & \multicolumn{3}{c}{\textbf{2.0 routed bpw}} & \multicolumn{3}{c}{\textbf{2.5 routed bpw}} & \multicolumn{3}{c}{\textbf{3.0 routed bpw}} \\
\cmidrule(lr){2-4}\cmidrule(lr){5-7}\cmidrule(l){8-10}
Method & PPL $\downarrow$ & KLD $\downarrow$ & Top1 $\uparrow$ & PPL $\downarrow$ & KLD $\downarrow$ & Top1 $\uparrow$ & PPL $\downarrow$ & KLD $\downarrow$ & Top1 $\uparrow$ \\
\midrule
MxMoE~\citep{duanmu2025mxmoe} & 8.0690 & 0.2727 & 79.87 & 7.2930 & 0.1489 & 85.30 & 7.0630 & 0.0870 & 88.93 \\
FRI-MxMoE~\citep{qi2026frimxmoe} & 8.4293 & 0.3118 & 78.14 & 7.9069 & 0.2359 & 80.90 & 7.6321 & 0.1992 & 82.55 \\
AlphaQ~\citep{yang2026alphaq} & 7.7790 & 0.2047 & 81.76 & 7.3119 & 0.1236 & 85.70 & \textbf{6.9932} & 0.0671 & 89.49 \\
MODE~\citep{chen2026mode} & 8.4724 & 0.3291 & 78.30 & 7.6407 & 0.1901 & 83.68 & 7.2792 & 0.1196 & 87.21 \\
GEMQ (alloc.)~\citep{deng2026gemq} & 7.9142 & 0.2189 & 81.89 & 7.3333 & 0.1160 & 87.01 & 7.1582 & 0.0762 & 89.85 \\
\textbf{NatScale (ours)} & \textbf{7.6767} & \textbf{0.1988} & \textbf{82.98} & \textbf{7.1507} & \textbf{0.0984} & \textbf{87.94} & 7.0149 & \textbf{0.0575} & \textbf{90.72} \\
\midrule
\textbf{Qwen3.8-Flash-Next} & \multicolumn{3}{c}{\textbf{2.0 routed bpw}} & \multicolumn{3}{c}{\textbf{2.5 routed bpw}} & \multicolumn{3}{c}{\textbf{3.0 routed bpw}} \\
\cmidrule(lr){2-4}\cmidrule(lr){5-7}\cmidrule(l){8-10}
Method & PPL $\downarrow$ & KLD $\downarrow$ & Top1 $\uparrow$ & PPL $\downarrow$ & KLD $\downarrow$ & Top1 $\uparrow$ & PPL $\downarrow$ & KLD $\downarrow$ & Top1 $\uparrow$ \\
\midrule
MxMoE~\citep{duanmu2025mxmoe} & 6.1777 & 0.5873 & 73.14 & 5.3389 & 0.4109 & 78.66 & 4.9619 & 0.2765 & 82.69 \\
FRI-MxMoE~\citep{qi2026frimxmoe} & 5.5050 & 0.4766 & 75.78 & 5.2422 & 0.4055 & 77.79 & 5.1490 & 0.3607 & 79.04 \\
AlphaQ~\citep{yang2026alphaq} & 5.1147 & 0.3445 & 79.42 & \textbf{4.7987} & 0.2487 & 82.82 & 4.8261 & 0.1381 & 86.72 \\
MODE~\citep{chen2026mode} & 5.5368 & 0.4446 & 76.93 & 4.9840 & 0.2594 & 82.48 & 4.7954 & 0.1663 & 86.04 \\
GEMQ (alloc.)~\citep{deng2026gemq} & 6.0587 & 0.5370 & 73.64 & 5.2986 & 0.3382 & 79.56 & 4.9891 & 0.2243 & 83.50 \\
\textbf{NatScale (ours)} & \textbf{5.0678} & \textbf{0.3293} & \textbf{80.22} & 4.8344 & \textbf{0.1941} & \textbf{84.86} & \textbf{4.7430} & \textbf{0.1243} & \textbf{87.67} \\
\midrule
\textbf{DeepSeek-V4-Flash-0731} & \multicolumn{3}{c}{\textbf{2.0 routed bpw}} & \multicolumn{3}{c}{\textbf{2.5 routed bpw}} & \multicolumn{3}{c}{\textbf{3.0 routed bpw}} \\
\cmidrule(lr){2-4}\cmidrule(lr){5-7}\cmidrule(l){8-10}
Method & PPL $\downarrow$ & KLD $\downarrow$ & Top1 $\uparrow$ & PPL $\downarrow$ & KLD $\downarrow$ & Top1 $\uparrow$ & PPL $\downarrow$ & KLD $\downarrow$ & Top1 $\uparrow$ \\
\midrule
MxMoE~\citep{duanmu2025mxmoe} & 21.1285 & 1.6657 & 58.43 & 16.4595 & 1.3704 & 62.85 & 9.8755 & 0.8292 & 71.11 \\
FRI-MxMoE~\citep{qi2026frimxmoe} & 12.5689 & 1.1318 & 65.65 & 9.7246 & 0.8756 & 69.59 & 7.4664 & 0.6025 & 74.56 \\
AlphaQ~\citep{yang2026alphaq} & 9.0467 & 0.7834 & 71.61 & 6.3970 & 0.4146 & 79.38 & 5.6550 & 0.2259 & 84.88 \\
MODE~\citep{chen2026mode} & 15.2631 & 1.3185 & 63.07 & 9.1905 & 0.7947 & 71.26 & 6.4961 & 0.4273 & 78.63 \\
GEMQ (alloc.)~\citep{deng2026gemq} & 8.9568 & 0.8201 & 70.05 & 6.1106 & 0.3652 & 81.01 & 5.5451 & 0.2103 & 85.98 \\
\textbf{NatScale (ours)} & \textbf{8.3647} & \textbf{0.7293} & \textbf{72.15} & \textbf{5.9615} & \textbf{0.3361} & \textbf{81.57} & \textbf{5.4883} & \textbf{0.1936} & \textbf{86.16} \\
\bottomrule
\end{tabular*}
\endgroup

\end{table*}

\begin{figure*}[t]
\centering
\includegraphics[width=\linewidth]{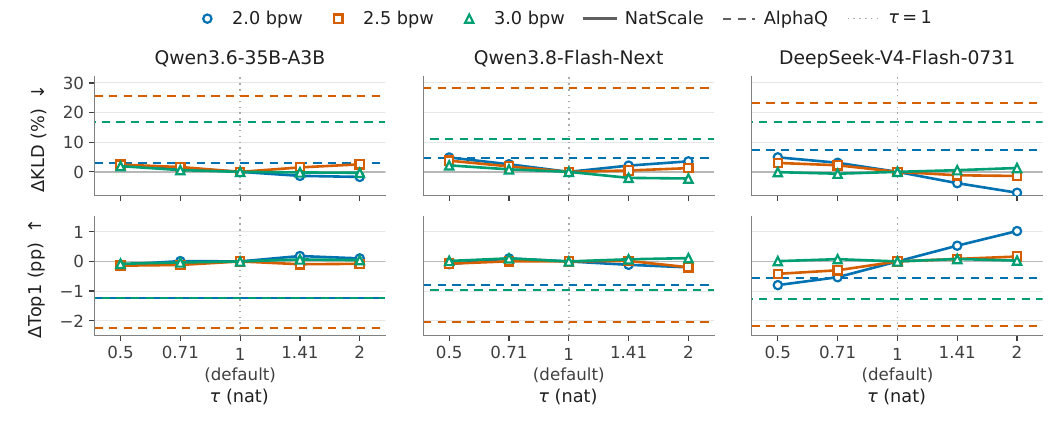}
\caption{\textbf{Sensitivity to the contrast target $\tau$.} Changes in KLD (\%) and Top1 (percentage points) are measured relative to NatScale at $\tau=1$ for each model and nominal routed-bpw budget. Solid curves show NatScale allocations at different targets; same-color horizontal dashed lines show the corresponding AlphaQ reference. The vertical dotted line marks the fixed default.}
\label{ns:tau-curves}
\end{figure*}

\subsection{Experimental Setup}
\label{sec:experimental-setup}

\noindent\textbf{Models.}
We study Qwen3.6-35B-A3B, Qwen3.8-Flash-Next, and DeepSeek-V4-Flash-0731, with reported total parameter counts of 35B, 125B, and 284B, respectively~\citep{qwen2026qwen36,qwen2026flashnext,deepseek2026v4,deepseek2026flash0731}. They activate approximately 3B, 6B, and 13B parameters per token and route each token to 8 of 256, 10 of 512, and 6 of 256 routed experts, respectively, in addition to a shared expert. The models span different sparse architectures and native weight formats. Details are specified in \appref{app:baselines}.

\noindent\textbf{Quantization and calibration.}
We allocate formats to the gate, up, and down projections of routed experts at nominal targets of 2.0, 2.5, and 3.0 bits per weight (bpw). All methods draw from the same precomputed candidate weights produced by MixedFlowQuant, while preserving their respective allocation rules. NatScale constrains encoded payload bytes and accounts for metadata separately; the adapted baselines include header overhead in their bit budgets. The targets specify nominal routed-weight storage. We use 512 fixed sequences of 2,048 tokens from Magpie, UltraChat, and WritingPrompts, divided into six equally weighted calibration folds~\citep{xu2025magpie,ding2023ultrachat,fan2018writingprompts}. Gradients, routing frequencies, candidate weights, and joint output errors are computed once and reused across budgets and contrast targets. Unless stated otherwise, NatScale uses joint output RMS, equally weighted structural projection factors, and $\tau=1$ nat, without an importance matrix or router fine-tuning.

\noindent\textbf{Baselines.}
We adapt five allocation methods to these shared candidates: MxMoE~\citep{duanmu2025mxmoe}, FRI-MxMoE~\citep{qi2026frimxmoe}, AlphaQ~\citep{yang2026alphaq}, MODE~\citep{chen2026mode}, and GEMQ~\citep{deng2026gemq}. They respectively emphasize reconstruction sensitivity, interpolated reconstruction costs, weight-structure priors, source-dependent expert exposure, and gradient-weighted perturbations. The GEMQ comparison evaluates its allocation criterion without GPTQ, progressive quantization, or router fine-tuning; its Qwen3.8-Flash-Next and DeepSeek-V4-Flash-0731 results use corrected gradient scoring. The adaptations are specified in \appref{app:baselines}.

\noindent\textbf{Evaluation.}
We measure perplexity (PPL), native-to-quantized KLD, and native-model top-token agreement (Top1) on WikiText-2~\citep{merity2016pointer}. Within each model, all methods use the same test stream and native reference. We score 255 next-token positions (256--510) in each independent 512-token window. Unlike the fixed-input local error estimates used to construct allocation costs, evaluation runs the complete quantized model, allowing quantization-induced changes to propagate through hidden states and routing decisions.

\subsection{Quantization Performance}
\label{sec:quantization-performance}

\noindent\textbf{Performance on WikiText-2.} Table~\ref{tab:baselines-main} shows that \method{} consistently preserves native-model predictions across architectures and storage budgets. With the fixed $\tau=1$ contrast target, it achieves the lowest KLD and highest Top1 among the adapted allocation baselines in every model--budget setting. It also yields the lowest PPL in seven of the nine settings and improves all three metrics over the adapted GEMQ allocation throughout. The improvements are especially clear on Qwen3.8-Flash-Next: at the middle budget, \method{} reduces AlphaQ's KLD from 0.2487 to 0.1941 while increasing top-token agreement from 82.82\% to 84.86\%. The results show consistent native-output preservation across the three tested architectures.

\noindent\textbf{Divergence versus Likelihood.} The PPL results qualify this advantage. AlphaQ obtains slightly lower perplexity on Qwen3.6-35B-A3B at the highest budget and on Qwen3.8-Flash-Next at the middle budget, although \method{} still achieves lower KLD and higher Top1 in both cases. This difference is expected because KLD measures agreement with the native output distribution, whereas perplexity measures likelihood assigned to observed next tokens. A quantized model can therefore more closely reproduce its native counterpart without attaining the best perplexity. These comparisons show stronger native-output fidelity at the fixed contrast target.

\noindent\textbf{Downstream task evaluation.} We further test whether the WikiText-2 gains carry over to reasoning, factuality, and coding tasks without task-specific refitting. Table~\ref{tab:natscale-downstream} reports Qwen3.8-Flash-Next at 2.5 routed bpw with thinking disabled. NatScale obtains the highest point estimate on three of four tasks: AIME 2025 (68.33\%), GPQA Diamond (78.28\%), and TruthfulQA binary (94.05\%). MODE scores highest on LiveCodeBench v6 (62.86\% versus NatScale's 61.14\%). Task rankings thus differ from native-output fidelity on WikiText-2. Evaluation protocols are given in \appref{app:downstream}.

\subsection{Ablation Studies}
\label{sec:ablation-studies}
\label{ns:threshold-experiment}
\label{sec:current-ablation}
\label{sec:main-allocation-patterns}

\noindent\textbf{Expert weight contrast.}
To isolate the effect of contrast, we vary $\tau\in\{0.5,\sqrt{0.5},1,\sqrt{2},2\}$ while retaining each model's sensitivity scores, candidate weights, local errors, and storage ceiling. Each target sets the separation of expert weights on the same information scale across models. Expert rankings remain fixed, so the sweep isolates the effect of weighting strength on allocation quality.

Figure~\ref{ns:tau-curves} shows that contrast materially affects output fidelity. On DeepSeek-V4-Flash-0731 at 2.0 bpw, increasing $\tau$ from 1 to 2 reduces KLD from 0.7293 to 0.6780 (7.03\%) and increases Top1 by 1.02 percentage points. At 3.0 bpw, however, its smallest KLD among the five targets occurs at $\tau=\sqrt{0.5}$. Across all nine settings, $\tau=2$ attains the smallest KLD in five, while $\tau=1$ yields lower PPL than $\tau=2$ in eight (\tabref{ns:tau-results}). We retain one nat as the common statistical reference across models and budgets. The sweep shows that a fixed expert ranking can lead to different quality trade-offs as contrast changes.

\begin{table*}[t]
\caption{\textbf{Whole-model Qwen3.8-Flash-Next comparison by Unsloth Dynamic (UD) reference tier.} NatScale uses each tier's UD byte ceiling. GSQ-RCO release sizes differ ($^{\dagger}$). Total size includes fixed tensors and metadata, with GB defined as $10^9$ bytes. Top1 is native-model agreement (\%). UD scores are externally reported, while NatScale and GSQ-RCO are evaluated locally using the same WikiText-2 stream and native reference. Bold indicates the lowest KLD and highest Top1 within each tier. Additional details appear in \appref{app:whole-model}.}
\label{tab:natscale-whole-main}
\centering\small

\begingroup
\setlength{\tabcolsep}{4pt}
\begin{tabular*}{\linewidth}{@{\extracolsep{\fill}}lrrrrrrrrr@{}}
\toprule
 & \multicolumn{9}{c}{UD reference tier (NatScale byte ceiling)} \\
\cmidrule(l){2-10}
Method & \multicolumn{3}{c}{\texttt{IQ1\_M}} & \multicolumn{3}{c}{\texttt{Q2\_K\_XL}} & \multicolumn{3}{c}{\texttt{IQ3\_XXS}} \\
\cmidrule(lr){2-4}\cmidrule(lr){5-7}\cmidrule(l){8-10}
 & Size (GB) & KLD $\downarrow$ & Top1 $\uparrow$ & Size (GB) & KLD $\downarrow$ & Top1 $\uparrow$ & Size (GB) & KLD $\downarrow$ & Top1 $\uparrow$ \\
\midrule
UD (reported) & 74.5 & 0.314739 & 79.691 & 78.9 & 0.224607 & 82.715 & 82.0 & 0.165120 & 85.414 \\
GSQ-RCO$^{\dagger}$ & 75.840 & 0.221337 & 82.998 & --- & --- & --- & 83.618 & 0.137169 & 86.898 \\
\midrule
NatScale (ours) & 74.529 & \textbf{0.196894} & \textbf{84.721} & 78.859 & \textbf{0.146297} & \textbf{86.543} & 81.952 & \textbf{0.124359} & \textbf{87.623} \\
\bottomrule
\end{tabular*}
\par\smallskip
{\raggedright\footnotesize
$^{\dagger}$ GSQ-RCO uses the released \texttt{IQ3\_XXS} and \texttt{IQ3\_S} checkpoints in the first and third groups, respectively. Their sizes differ from the UD reference budgets.\par}
\endgroup

\end{table*}

\begin{table}[t]
\caption{\textbf{Downstream performance on Qwen3.8-Flash-Next at 2.5 routed bpw (\%).} Thinking is disabled. AIME is mean response accuracy on AIME 2025; GPQA is GPQA Diamond; TQA is TruthfulQA binary accuracy; and LCB is LiveCodeBench v6 pass@1. GEMQ denotes the allocation-only adaptation. Bold indicates the highest point estimate per task.}
\label{tab:natscale-downstream}
\centering\small
\begingroup
\setlength{\tabcolsep}{2pt}
\begin{tabular*}{\linewidth}{@{\extracolsep{\fill}}lrrrr@{}}
\toprule
Method & AIME & GPQA & TQA & LCB \\
\midrule
MxMoE & 63.33 & 73.23 & 93.54 & 57.14 \\
FRI-MxMoE & 50.00 & 67.68 & 92.03 & 51.43 \\
AlphaQ & 62.50 & 73.74 & 93.04 & 58.86 \\
MODE & 60.83 & 74.75 & 93.42 & \textbf{62.86} \\
GEMQ (alloc.) & 15.83 & 63.13 & 93.80 & 47.43 \\
\midrule
\textbf{NatScale (ours)} & \textbf{68.33} & \textbf{78.28} & \textbf{94.05} & 61.14 \\
\bottomrule
\end{tabular*}
\endgroup

\end{table}

\noindent\textbf{Score components.}
We remove local RMS error, precision-dependent decay, gradient weights, gradient smoothing, or routing-exposure magnitude in turn. All variants reuse identical candidate weights, calibration statistics, and payload ceilings. Removing gradient weights or bit decay increases KLD and decreases Top1 in every model--budget setting; removing bit decay also worsens PPL in all nine. At 2.5 bpw, each of the five removals degrades both KLD and Top1. Removing local RMS or gradient smoothing improves all three metrics for DeepSeek-V4-Flash-0731 at 2.0 bpw, but degrades them at its higher budgets. The complete results (\appref{app:current-ablation}) support the roles of gradient weighting and precision decay.

\noindent\textbf{Joint versus separate projection errors.}
To examine the effect of projection interactions, we compare joint error measurement with the sum of independently measured gate, up, and down projection errors, keeping all other allocation conditions fixed. As shown in \tabref{tab:natscale-joint-separate-default}, neither approach consistently dominates across storage budgets. Separate scoring performs slightly better at 2.0 bpw, while higher budgets exhibit mixed results across KLD and Top1. Nevertheless, joint measurement captures the combined effects of projection perturbations that separate measurements may overlook.

We further evaluate both approaches on downstream tasks using Qwen3.8-Flash-Next at 2.5 routed bpw with thinking disabled. Joint scoring outperforms separate scoring on three of four benchmarks, with particularly notable improvements on AIME and GPQA, while separate scoring performs better on LiveCodeBench (\tabref{tab:natscale-joint-separate-downstream}). These results support joint measurement as a reasonable default for capturing projection interactions, although its benefits remain task-dependent.

\noindent\textbf{Allocation patterns.}
We visualize expert-level precision and layer-level storage at 2.5 bpw for all three models in \appref{app:allocation-comparison}. NatScale assigns different effective precisions to experts within the same layer, a variation obscured by layer-averaged budgets. The heatmaps, cumulative storage distributions, and layer-budget curves describe how the competing allocation rules distribute a global budget across experts and layers. They show how NatScale distributes precision within and across layers.

\subsection{Whole-Model Storage and Quality}
\label{sec:natscale-whole-model}

We further evaluate NatScale under whole-model storage constraints by extending precision allocation to shared experts and selected attention and gated-residual projections on Qwen3.8-Flash-Next. Unlike the routed-only experiments, this evaluation uses candidates fitted with the UD importance matrix and storage budgets aligned with Unsloth Dynamic (UD) reference tiers. We compare against UD and released GSQ-RCO checkpoints~\citep{unsloth2026flashnext,istadaslab2026flashnextgsqrco}.

Table~\ref{tab:natscale-whole-main} shows that NatScale consistently achieves lower KLD and higher Top1 agreement than the available comparators across all three storage tiers. Notably, compared with GSQ-RCO IQ3\_XXS, NatScale reduces model size by 1.31\,GB while lowering KLD by 11.04\%. These results suggest that contrast-calibrated allocation remains effective beyond routed experts, offering better preservation of native outputs under practical storage constraints. The comparisons use different inference backends, while UD results are externally reported; further protocol details are provided in \appref{app:whole-model}.

\section{Conclusion}
\label{sec:conclusion}

We presented \method{}, a mixed-precision MoE allocation framework that explicitly calibrates the contrast between expert sensitivities instead of relying on their rankings alone. It combines a model-specific, rank-preserving power transformation with joint gate, up, and down output errors and a structural precision prior to allocate weight formats under a storage budget. At the fixed one-nat target, NatScale achieves the lowest KLD and highest top-token agreement in all nine comparisons with five adapted baselines, and the lowest PPL in seven. Downstream and whole-model evaluations provide additional evidence of its utility. The contrast sweeps show that native-output fidelity and perplexity can favor different targets. %

\label{main-text-end}
\FloatBarrier
\section*{Limitations}
\label{sec:limitations}

While effective, NatScale requires additional computation to estimate expert sensitivity and select precision levels, adding to the cost of preparing a quantized model. Its allocation decisions also depend on how well the calibration data reflect the inputs encountered during deployment. Changes in input distribution or expert usage may therefore affect the quality of the resulting allocation. In addition, the error estimates used to guide precision selection are approximations and may not fully capture interactions between quantized components or changes in downstream task performance. Future work could explore more efficient calibration procedures and evaluate the stability of precision allocations across different data distributions. It could also examine whether calibration statistics can be reused when adapting a model to related application domains. Another direction is to study how the choice and amount of calibration data affect precision selection in task-specific settings.

\bibliography{references}
\clearpage
\appendix
\raggedbottom
\etocdepthtag.toc{appendix}
\begingroup
\hypersetup{linktoc=all}
\etocsettagdepth{main}{none}
\etocsettagdepth{appendix}{subsection}
\etocstandardlines
\etocsettocstyle{%
  \pdfbookmark[0]{Appendix Contents}{appendix-contents}%
  \section*{Appendix Contents}%
  \label{app:contents}%
}{}
\tableofcontents
\endgroup

\section{Contrast Calibration}
\label{ns:appendix-calibration}

This section supplement the contrast calibration introduced in Section~\ref{ns:contrast} and proves the guarantees stated in Section~\ref{ns:theory}. On the positive support $\mathcal I_+$, let $N=|\mathcal I_+|$ and assume $x_i=\log g_i$ is nonconstant. The analysis uses the log-partition function and its associated probability distribution:
\begin{equation}
 \begin{gathered}
 \psi(t)=\log\left(\frac1N\sum_i e^{tx_i}\right),
 \\ q_i^{(t)}=\frac{e^{tx_i}}{\sum_j e^{tx_j}}.
 \end{gathered}
 \label{ns:log-partition}
\end{equation}
These definitions give $\psi'(t)=\mathbb E_{q^{(t)}}x$, $\psi''(t)=\operatorname{Var}_{q^{(t)}}(x)>0$, $u=q^{(0)}$, and $q(\beta)=q^{(2\beta)}$. We write the change in expected log sensitivity as $\Delta(t)=\psi'(t)-\psi'(0)$ throughout the analysis.

\subsection{The Energy Reference}

To derive the energy reference behind the default target in Corollary~\ref{ns:weibull-corollary}, recall that the second moment $g_i$ corresponds to amplitude $r_i=\sqrt{g_i}$. After calibration, $q_i=w_i^2/\sum_jw_j^2$ describes how the corresponding energy is distributed across experts; at $\beta=1/2$, $q_i=g_i/\sum_jg_j$. The following result identifies the reference distribution for a nonnegative energy variable with a fixed mean.

\begin{proposition}[Maximum-entropy energy reference]
\label{ns:energy-reference}
Among densities of a nonnegative variable $V$ with $\mathbb E V=1$, the exponential density $e^{-v}$ uniquely maximizes differential entropy. Its amplitude $Z=\sqrt V$ has CDF
\begin{equation}
 \begin{gathered}
 F_0(z)=1-\exp(-z^2),\\ z\ge0.
 \end{gathered}
 \label{ns:energy-reference-cdf}
\end{equation}
\end{proposition}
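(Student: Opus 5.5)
The proof has two parts: a maximum-entropy statement, which I would prove with Gibbs' inequality (nonnegativity of relative entropy), and a change of variables for the amplitude.

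\textbf{Entropy bound.} Let $p$ be any density on $[0,\infty)$ with $\int_0^\infty v\,p(v)\,dv=1$ and finite differential entropy $h(p)=-\int p\log p$. Write $\varphi(v)=e^{-v}$ for the exponential density. Then
\begin{equation*}
0\le D_{\mathrm{KL}}(p\,\|\,\varphi)=\int_0^\infty p\log\frac{p}{\varphi}=-h(p)+\int_0^\infty v\,p(v)\,dv=1-h(p).
\end{equation*}
The second equality uses $-\log\varphi(v)=v$, and the third uses the mean constraint. Hence $h(p)\le 1$. For $\varphi$ itself, $h(\varphi)=\mathbb E_\varphi V=1$, so the bound is attained and the exponential density is a maximizer. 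It is also admissible, since it is a density on $[0,\infty)$ with mean $1$.

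\textbf{Uniqueness.} Equality $h(p)=1$ forces $D_{\mathrm{KL}}(p\,\|\,\varphi)=0$. By the equality case of Gibbs' inequality, this gives $p=\varphi$ almost everywhere.

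\textbf{Technical caveats.} I expect the only real care to be needed here. Densities with $h(p)=-\infty$ are trivially not maximizers. For the rest, the relative-entropy identity is legitimate because $\int v\,p$ is finite, so splitting $\int p\log(p/\varphi)$ into $-h(p)+\int vp$ involves no $\infty-\infty$. I would add a remark that if $h(p)$ is not well defined, such densities are excluded from the class under comparison.

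\textbf{Amplitude distribution.} For $Z=\sqrt V$ with $V\sim\mathrm{Exp}(1)$ and $z\ge0$,
\begin{equation*}
F_0(z)=\Pr(Z\le z)=\Pr(V\le z^2)=1-e^{-z^2},
\end{equation*}
because $v\mapsto\sqrt v$ is increasing on $[0,\infty)$. This is the Weibull CDF with shape $2$ and scale $1$, i.e.\ a Rayleigh law. The corollary's Weibull fit of the amplitudes $r_i=\sqrt{g_i}$ will later be compared against this reference.
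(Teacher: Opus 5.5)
Your proof is correct and matches the paper's argument essentially step for step. Both compute $D_{\mathrm{KL}}(p\,\|\,e^{-v}) = 1 - h(p) \ge 0$, with equality only when $p = e^{-v}$ almost everywhere. The only differences are minor: you read off $F_0$ directly from $\Pr(V \le z^2)$ where the paper first derives the density $2ze^{-z^2}$, and you add a justification, which the paper omits, that splitting $\int p\log(p/\varphi)$ into $-h(p) + \int vp$ involves no $\infty - \infty$.
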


\begin{proof}[Proof of Proposition~\ref{ns:energy-reference}]
For a density $\rho$ with $\mathbb E_\rho V=1$, define differential entropy with respect to the energy coordinate $dv$. Taking $\rho_0(v)=e^{-v}$ gives
\begin{equation}
 \begin{aligned}
 D_{\mathrm{KL}}(\rho\|\rho_0)
 &=\int_0^\infty\rho(v)\log\rho(v)\,dv\\
 &\quad+\mathbb E_\rho V\\
 &=1-h(\rho)\ge0.
 \end{aligned}
\end{equation}
Equality requires $\rho=\rho_0$ almost everywhere. Transforming $Z=\sqrt V$ gives density $2ze^{-z^2}$ and the stated CDF.
\end{proof}

Changing the positive mean rescales all amplitudes by a common factor. The entropy maximization is defined in the energy coordinate inherited from the second moment; defining it in the amplitude coordinate would give a different problem. This continuous reference provides an energy interpretation, while the finite expert distribution $q$ is calibrated separately to its prescribed contrast.

\subsection{Entropy-Regularized Reweighting}
\label{ns:gibbs-power}

To characterize this finite distribution, we express the power family as entropy-regularized reweighting. For $t\ge0$, the exponential family above is the unique solution of
\begin{equation}
 q^{(t)}=\underset{q\in\Delta_N}{\operatorname{argmax}}
 \left\{t\mathbb E_q x-D_{\mathrm{KL}}(q\|u)\right\},
 \label{ns:gibbs-variational}
\end{equation}
where $\Delta_N$ is the probability simplex. The objective equals $\psi(t)-D_{\mathrm{KL}}(q\|q^{(t)})$, so its unique maximizer is the stated distribution. Increasing $t$ strengthens the preference for larger log sensitivities. NatScale determines this preference by setting $t=2\beta_\tau$ through the realized contrast $\tau$, placing different gradient distributions on a common information scale.

\subsection{Solving for the Exponent}

This contrast-based specification determines a unique exponent because the divergence increases strictly along the power family.

\begin{proposition}[Well-defined contrast calibration]
\label{ns:unique}
If the positive gradients are nonconstant, every $\tau>0$ has a unique finite solution $\beta_\tau>0$ in \equref{ns:calibration}. This solution increases strictly with $\tau$.
\end{proposition}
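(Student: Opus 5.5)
We will work with the log-partition function $\psi$ and the tilted family $q^{(t)}$ of \equref{ns:log-partition}, using $q(\beta)=q^{(2\beta)}$ and $u=q^{(0)}$. The plan is to write the Jeffreys divergence $J(t)=\mathcal J(q^{(t)},u)$ in closed form, show it is continuous and strictly increasing on $[0,\infty)$ with $J(0)=0$ and $J(t)\to\infty$, and then take $\beta_\tau=t_\tau/2$. Uniqueness, positivity and strict monotonicity of $\beta_\tau$ in $\tau$ then come from the intermediate value theorem and the inverse of a strictly increasing continuous function.

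The first step is the closed form. Since $\log q^{(t)}_i=tx_i-\log\sum_j e^{tx_j}=tx_i-\psi(t)-\log N$, we get $\log(q^{(t)}_i/u_i)=tx_i-\psi(t)$. Hence
\begin{equation*}
D_{\mathrm{KL}}(q^{(t)}\|u)=t\psi'(t)-\psi(t),\qquad D_{\mathrm{KL}}(u\|q^{(t)})=\psi(t)-t\psi'(0),
\end{equation*}
where $\psi(0)=0$. Adding the two gives the key identity
\begin{equation*}
J(t)=t\bigl(\psi'(t)-\psi'(0)\bigr)=t\,\Delta(t).
\end{equation*}
$J$ is smooth because $\psi$ is smooth, and $J(0)=0$.

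The second step is monotonicity. Differentiating gives $J'(t)=\Delta(t)+t\psi''(t)$. Because $x$ is nonconstant, $\psi''=\operatorname{Var}_{q^{(t)}}(x)>0$, so $\Delta$ is strictly increasing with $\Delta(0)=0$. Therefore $\Delta(t)>0$ for $t>0$, and $J'(t)>0$ on $(0,\infty)$, which makes $J$ strictly increasing on $[0,\infty)$.

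The main obstacle is showing that $J$ is unbounded, since the divergence might a priori saturate. As $t\to\infty$, $\psi'(t)=\mathbb E_{q^{(t)}}x\to\max_i x_i$. Nonconstancy gives $\max_i x_i>\frac1N\sum_i x_i=\psi'(0)$. So $\Delta(t)$ tends to a strictly positive limit; more simply, $\Delta(t)\ge\Delta(1)>0$ for $t\ge1$ by monotonicity. It follows that $J(t)\ge t\Delta(1)\to\infty$.

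The third step is to conclude. By continuity and the intermediate value theorem, for each $\tau>0$ there is a unique $t_\tau\in(0,\infty)$ with $J(t_\tau)=\tau$. Set $\beta_\tau=t_\tau/2>0$. Because $J$ is strictly increasing and continuous, $\tau\mapsto t_\tau$ is its inverse on $(0,\infty)$ and is strictly increasing; indeed it is smooth with positive derivative $1/J'(t_\tau)$ by the inverse function theorem. Hence $\beta_\tau$ increases strictly with $\tau$.
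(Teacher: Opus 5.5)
Your proof is correct and follows essentially the same route as the paper. Both derive $\mathcal J(q^{(t)},u)=t\Delta(t)$ from $\log(q_i^{(t)}/u_i)=tx_i-\psi(t)$, show the derivative $\Delta(t)+t\psi''(t)$ is positive, get divergence to infinity from $\Delta(t)$ staying bounded away from zero, and invert with $t=2\beta$; your bound $\mathcal J(q^{(t)},u)\ge t\Delta(1)$ is just a slightly more elementary version of the paper's limit $\Delta(t)\to\max_i x_i-\mathbb E_u x>0$.
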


\begin{proof}[Proof of Proposition~\ref{ns:unique}]
Using $\log(q_i^{(t)}/u_i)=tx_i-\psi(t)$ gives
\begin{equation}
 \mathcal J(q^{(t)},u)
 =t\Delta(t).
 \label{ns:jeffreys-proof}
\end{equation}
Since $\Delta'(t)=\psi''(t)>0$, the divergence is continuous, vanishes at $t=0$, and has derivative $\Delta(t)+t\psi''(t)>0$ for $t>0$. As $t\to\infty$, $\Delta(t)\to\max_i x_i-\mathbb E_u x>0$, so the divergence tends to infinity. Each $\tau>0$ therefore has a unique positive root, increasing with $\tau$; set $t=2\beta$.
\end{proof}

The solver uses this monotonicity to find the exponent with a bracketed root finder. It checks the support before standardization, evaluates the distribution using log-sum-exp, and verifies the returned contrast. For a fixed nonconstant score vector, the behavior near zero contrast is described by the expansion
\begin{align}
 \mathcal J(q(\beta),u)&=4\operatorname{Var}_u(x)\beta^2+O(\beta^3),\nonumber\\
 \beta_\tau&=\frac{\sqrt\tau}{2\sqrt{\operatorname{Var}_u(x)}}+O(\tau).
 \label{ns:small-tau}
\end{align}

\paragraph{Zero and degenerate statistics.}
The positive-support condition also determines how calibration handles degenerate statistics. Zero gradients retain zero weight and are excluded from calibration, while empty support or constant positive scores admit no positive-target solution and are rejected. As $\tau\downarrow0$, weights become uniform on positive support and remain zero elsewhere. When zero scores exist, this limiting distribution therefore differs from the all-one-weight ablation. All five evaluated targets are positive.

\subsection{The Weibull Case}
\label{ns:appendix-weibull}

We derive the one-nat identity in Corollary~\ref{ns:weibull-corollary} by connecting calibration to Weibull amplitude fitting. A two-parameter Weibull CDF $F(r)=1-\exp[-(r/a)^k]$ maps native amplitudes $r=\sqrt g$ to the energy reference in \equref{ns:energy-reference-cdf} as
\begin{equation}
 F_0^{-1}(F(\sqrt g))=a^{-k/2}g^{k/4}.
 \label{ns:weibull-map}
\end{equation}
The common scale leaves allocation unchanged, so the fitted shape determines the exponent. In particular, a full unweighted maximum-likelihood fit yields $\beta=\widehat k/4$, which corresponds to exactly one nat of contrast.

\begin{proof}[Proof of Corollary~\ref{ns:weibull-corollary}]
Let $z_i=\log r_i=x_i/2$ and $\bar z=N^{-1}\sum_i z_i$. For the ordinary unweighted fit to complete observations, profiling the Weibull likelihood over scale~\citep{onweibullmle} gives $\widehat a(k)^k=N^{-1}\sum_i e^{kz_i}$ and
\begin{equation}
 \begin{aligned}
 \frac{\ell_{\mathrm{prof}}(k)}N
 ={}&\log k-\log\left(\frac1N\sum_i e^{kz_i}\right)\\
 &+(k-1)\bar z-1.
 \end{aligned}
 \label{ns:weibull-profile}
\end{equation}
Its stationary equation is
\begin{equation}
 \frac1{\widehat k}
 =\sum_i\frac{e^{\widehat k z_i}}{\sum_j e^{\widehat k z_j}}z_i-\bar z.
 \label{ns:weibull-shape}
\end{equation}
The normalized profile has second derivative $-1/k^2-\operatorname{Var}_{\operatorname{softmax}(kz)}(z)<0$, so its stationary point is unique. To relate this point to contrast, observe that at $\beta=\widehat k/4$, $q=\operatorname{softmax}(\widehat k z)$. Multiplying Equation~\ref{ns:weibull-shape} by $\widehat k$ and applying Equation~\ref{ns:jeffreys-proof} then gives $\mathcal J=1$.
\end{proof}

Because the identity follows from the likelihood equation, it holds for any positive nonconstant sample without assuming that the data follow a Weibull distribution. The same argument extends to other reference coordinates: for $p>0$, mapping to $1-e^{-z^p}$ gives $w\propto g^{\widehat k/(2p)}$ and one nat for normalized $w^p$. The energy coordinate selects $p=2$, while the likelihood determines the reference contrast.

\subsection{Weight Entropy and Power Invariance}

Having calibrated the squared weights, we next examine the linear sensitivity weights used in the allocation cost of Equation~\ref{ns:complete-cost}. Their entropy remains bounded below, and their normalized values are invariant to a common positive-power reparameterization of the gradient scores.

\begin{proposition}[Entropy of the linear weights]
\label{ns:linear-entropy}
For $p_i=g_i^{\beta_\tau}/\sum_jg_j^{\beta_\tau}$,
\begin{equation}
 \begin{gathered}
 \mathcal J(p,u)<\tau/2,
 \\ H(p)>\log N-\tau/2.
 \end{gathered}
\end{equation}
\end{proposition}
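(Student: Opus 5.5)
The plan is to work entirely in the exponential-family notation of \equref{ns:log-partition}. With $t^\ast=2\beta_\tau$, the squared-weight distribution is $q(\beta_\tau)=q^{(t^\ast)}$, while the linear weights satisfy $p=q^{(t^\ast/2)}$, since $p_i\propto e^{\beta_\tau x_i}$. By \equref{ns:jeffreys-proof} we have $\mathcal J(q^{(t)},u)=t\Delta(t)$ for every $t\ge0$, and by construction $\tau=t^\ast\Delta(t^\ast)$. The first claim therefore reduces to
\begin{equation}
 \tfrac{t^\ast}{2}\,\Delta\!\left(\tfrac{t^\ast}{2}\right)<\tfrac12\,t^\ast\Delta(t^\ast),
\end{equation}
which is $\Delta(t^\ast/2)<\Delta(t^\ast)$. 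This follows because $\Delta'(t)=\psi''(t)=\operatorname{Var}_{q^{(t)}}(x)>0$ for nonconstant $x$, which the proof of Proposition~\ref{ns:unique} already established. Since $t^\ast>0$, the strict inequality holds.

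For the entropy bound, I use the identity $D_{\mathrm{KL}}(p\|u)=\log N-H(p)$, valid because $u$ is uniform on $\mathcal I_+$. Both KL terms in the Jeffreys divergence are nonnegative, so $D_{\mathrm{KL}}(p\|u)\le\mathcal J(p,u)<\tau/2$, which gives $H(p)>\log N-\tau/2$. Here $p$ and the entropy are understood on the positive support $\mathcal I_+$; zero-gradient experts carry zero weight and contribute nothing to $H$.

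No step is a serious obstacle. The only care needed is in the reduction itself: making sure the factor of $2$ in $t=2\beta$ is tracked correctly, so that the linear weights sit at half the natural parameter of the squared weights, and checking that strictness survives. Strictness relies on $\psi''>0$, which needs the positive scores to be nonconstant; this is the same standing assumption as in Proposition~\ref{ns:unique}. If one wants a slightly sharper statement, the same argument shows $\mathcal J(p,u)\le\tfrac12\tau\cdot\Delta(t^\ast/2)/\Delta(t^\ast)$.
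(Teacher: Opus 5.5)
Your proof is correct and is essentially the paper's argument: with $p=q^{(\beta_\tau)}$, the identity $\mathcal J(q^{(t)},u)=t\Delta(t)$ and strict monotonicity of $\Delta$ give $\mathcal J(p,u)=\beta_\tau\Delta(\beta_\tau)<\beta_\tau\Delta(2\beta_\tau)=\tau/2$, and the entropy bound then follows from $D_{\mathrm{KL}}(p\|u)=\log N-H(p)\le\mathcal J(p,u)$. Your closing ``sharper'' bound is in fact an equality, since $\tau/2=\beta_\tau\Delta(2\beta_\tau)$.
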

\begin{proof}
Since $\Delta$ is strictly increasing,
\begin{equation}
 \mathcal J(p,u)=\beta_\tau\Delta(\beta_\tau)
 <\beta_\tau\Delta(2\beta_\tau)=\tau/2.
\end{equation}
Finally, $D_{\mathrm{KL}}(p\|u)=\log N-H(p)\le\mathcal J(p,u)$.
\end{proof}
The entropy bound gives $\exp H(p)>Ne^{-\tau/2}$ for the gradient weights before candidate-dependent costs are applied. It therefore concerns the weights themselves, not the resulting budget shares.

\begin{proposition}[Uniform positive-power reparameterization]
\label{ns:power-invariance}
If $g'_i=a g_i^c$ for common $a,c>0$, recalibration at the same $\tau$ yields $\beta'_\tau=\beta_\tau/c$ and identical normalized sensitivity weights.
\end{proposition}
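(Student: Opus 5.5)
The approach is to track how the reparameterization acts on the log-scores $x_i=\log g_i$ and then invoke the uniqueness in Proposition~\ref{ns:unique}.

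First, the support and nondegeneracy are preserved. Since $a,c>0$, we have $g'_i>0$ exactly when $g_i>0$, so $\mathcal I_+$ is unchanged. The new log-scores are
\[
x'_i=\log a+c\,x_i,
\]
which are nonconstant whenever the $x_i$ are. Hence Proposition~\ref{ns:unique} applies to $g'$, and for each $\tau>0$ there is a unique $\beta'_\tau>0$.

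Next, compute the squared-weight distribution along the power family for $g'$:
\[
q'_i(\beta)=\frac{a^{2\beta}g_i^{2c\beta}}{\sum_j a^{2\beta}g_j^{2c\beta}}=q_i(c\beta).
\]
The common factor $a^{2\beta}$ cancels on normalization. Therefore $\mathcal J(q'(\beta),u)=\mathcal J(q(c\beta),u)$. In the notation of \equref{ns:jeffreys-proof}, this is just the statement that the tilted family satisfies $q'^{(t)}=q^{(ct)}$: the additive shift $\log a$ drops out of the softmax, and the slope $c$ rescales $t$.

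Now apply uniqueness. Setting $\beta=\beta_\tau/c$ gives
\[
\mathcal J(q'(\beta_\tau/c),u)=\mathcal J(q(\beta_\tau),u)=\tau .
\]
So $\beta_\tau/c$ solves the calibration equation for $g'$. By the uniqueness in Proposition~\ref{ns:unique}, $\beta'_\tau=\beta_\tau/c$.

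Finally, the calibrated weights are
\[
w'_i=(g'_i)^{\beta'_\tau}=a^{\beta_\tau/c}\,g_i^{\beta_\tau}=a^{\beta_\tau/c}\,w_i .
\]
These differ from $w_i$ only by a positive constant, so the normalized weights $w'_i/\sum_j w'_j$ coincide with $w_i/\sum_j w_j$. Zero-score experts keep zero weight in both cases.

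There is no real obstacle here. The only points that need care are checking that nonconstancy and the positive support are preserved, which is what licenses the uniqueness step, and noting that the scale factor $a$ cancels in every normalization.
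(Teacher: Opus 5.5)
Your proof is correct and follows essentially the same route as the paper, which also substitutes $\beta'=\beta/c$ to get $q'(\beta')=q(\beta)$, invokes the uniqueness in Proposition~\ref{ns:unique} to identify the recalibrated root, and notes that $(g'_i)^{\beta'}=a^{\beta/c}g_i^{\beta}$ differs from $w_i$ only by a common factor. You also check explicitly that the positive support and nonconstancy are preserved, which is what licenses the uniqueness step and which the paper leaves implicit.
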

\begin{proof}
Substituting $\beta'=\beta/c$ gives $q'(\beta')=q(\beta)$, so uniqueness identifies the recalibrated root. Moreover, $(g'_i)^{\beta'}=a^{\beta/c}g_i^\beta$: the transformed weights differ only by a common factor. Their normalized values, and hence the allocation when other costs are fixed, are preserved.
\end{proof}
This invariance also explains the behavior of nearly constant scores. For $g_i\propto\exp(\varepsilon z_i)$ with fixed nonconstant $z$, the exponent grows as $1/\varepsilon$ while normalized weights remain nonuniform. A fixed positive contrast thus amplifies relative differences even as $\varepsilon\downarrow0$; exactly constant scores are rejected before standardization.

\section{Residual Modeling and Sensitivity}
\label{ns:appendix-residual}

The allocation cost in Equation~\ref{ns:complete-cost} combines calibrated sensitivity weights with candidate residuals. To explain this combination, we examine raw-RMS stability and relate the gradient and error statistics of Sections~\ref{ns:contrast} and~\ref{ns:distortion} through native first-order influence.

\paragraph{Stability of raw RMS.}
We begin with the effect of small residual perturbations. For a residual vector $e\in\mathbb R^d$, let $R(e)=\|e\|_2/\sqrt d$. The reverse triangle inequality gives
\begin{equation}
 |R(e+\delta)-R(e)|\le\|\delta\|_2/\sqrt d.
 \label{ns:rms-stability}
\end{equation}
Centering the MSE before taking its square root changes this behavior. For two scalar candidates with residual $a>0$, perturbing one to $a+\delta$ changes raw RMS by $\delta$ but centered RMS by $\sqrt{2a\delta+\delta^2}$. The ratio of the centered change to the raw change diverges as $\delta\downarrow0$, showing how MSE centering amplifies near ties and makes errors depend on the candidate minimum. Shifting a \emph{complete} cost row, however, preserves allocation (Equation~\ref{ns:objective-scale}).

\paragraph{A native first-order interpretation.}
To relate residual amplitude to model sensitivity, consider a differentiable real-valued graph with locally unchanged routing indices. Interpolate each expert function as
\begin{equation}
 y_i^{(\eta)}(x)=y_i^0(x)+\eta\,[y_i^{c_i}(x)-y_i^0(x)].
\end{equation}
On native inputs, let $\epsilon_{i,t}=y_i^{c_i}(x_{i,t})-y_i^0(x_{i,t})$ and $v_{i,t}$ be the native summed-CE gradient. For $n^{\mathrm{orig}}_{s,i}>0$, define
\begin{align}
 G_{s,i}&=\frac{\sum_{t\in\mathcal T^{\mathrm{orig}}_{s,i}}\|v_{i,t}\|_2^2}{n^{\mathrm{orig}}_{s,i}H_i},\nonumber\\
 M^{\mathrm{orig}}_{s,i}&=\frac{\sum_{t\in\mathcal T^{\mathrm{orig}}_{s,i}}\|\epsilon_{i,t}\|_2^2}{n^{\mathrm{orig}}_{s,i}H_i},\nonumber\\
 I_s&=\frac1{T_s}\sum_i\sum_{t\in\mathcal T^{\mathrm{orig}}_{s,i}}v_{i,t}^{\mathsf T}\epsilon_{i,t}.
\end{align}
Zero-exposure cells contribute zero. The common-$\eta$ derivative is
\begin{equation}
 \left.\frac{d\mathcal L_{\mathrm{CE}}(\eta)}{d\eta}\right|_{\eta=0}
 =\sum_s T_s I_s.
 \label{ns:native-derivative}
\end{equation}
To incorporate fold weights, the influence $I_\pi=\sum_s\pi_s I_s$ reweights injection events by $\pi_s/T_s$ while retaining summed-CE gradients. This definition also applies when folds split interacting sequence positions. Applying Cauchy--Schwarz gives
\begin{equation}
 |I_\pi|
 \le\sum_{s,i}\pi_s H_i f_{s,i}\sqrt{G_{s,i}}\sqrt{M^{\mathrm{orig}}_{s,i}}.
 \label{ns:influence-bound}
\end{equation}
Here, the concatenated norm product divided by $T_s$ is $H_if_{s,i}\sqrt{G_{s,i}M^{\mathrm{orig}}_{s,i}}$. Because all experts within each evaluated checkpoint share an output width, $H_i$ is a common allocation factor.

This bound motivates the use of exposure, gradient amplitude, and residual amplitude in the allocation cost. Pooling gradients, calibrating weights, and applying $A_i(c)$ remain empirical choices that the bound does not determine. The implementation also distinguishes the event sets used for error measurement: $M_{s,i}$ uses separate replay events (Equation~\ref{ns:local-error}), whereas the derivative requires $M^{\mathrm{orig}}_{s,i}$ from matched native events. These sets can differ near routing boundaries.

\paragraph{Effective byte price.}
\label{ns:appendix-rate}
We now develop the storage-price interpretation from Section~\ref{ns:theory}, first for discrete candidates and then for a continuous rate model. Write $L_i(c)=A_i(c)E_i(c)$. For $w_i>0$ and a common Lagrangian byte price $\lambda\ge0$,
\begin{equation}
 \begin{aligned}
 &\argmin_c\{w_iL_i(c)+\lambda C_i(c)\}\\
 &\quad=\argmin_c\{L_i(c)+(\lambda/w_i)C_i(c)\}.
 \end{aligned}
 \label{ns:effective-price}
\end{equation}
Dividing by $w_i$ proves the identity and separates the roles of the two factors. The product $A_iE_i$ orders one expert's candidates, while a larger $w_i$ lowers that expert's effective byte price and strengthens its competition for storage.

\paragraph{Continuous high-rate model.}
To examine how this competition affects precision, suppose one continuous bit width $b_i$ describes each expert, with $M_i\simeq K_i2^{-2b_i}$, $A_i\simeq a_i2^{-2b_i}$ and $C_i=N_i b_i/8$ for $N_i$ weights. Under these assumptions, $A_i\sqrt{M_i}$ decays as $2^{-3b_i}$. Omitting explicit precision decay gives $2^{-b_i}$ for RMS or $2^{-2b_i}$ for MSE, with potentially different coefficients.

These cases can be expressed through the objective $\sum_i g_i^\beta h_i2^{-\nu b_i}$ with $g_i,h_i,N_i,\nu>0$. For an active linear budget and an interior solution, the KKT conditions give
\begin{align}
 b_i-b_j={}&\frac\beta\nu\log_2\frac{g_i}{g_j}\nonumber\\
 &+\frac1\nu\log_2\frac{h_i/N_i}{h_j/N_j}.
 \label{ns:rate-equation}
\end{align}
The relation separates sensitivity contrast $\beta$ from precision response $\nu$ in the interior model. In the implemented allocator, actual formats and projection choices instead provide discrete improvements per byte.

\section{Implementation Details}
\label{ns:implementation}

Algorithm~\ref{ns:algorithm} implements the three stages of Section~\ref{sec:method}. The details below explain how joint errors are cached and how the final allocation is rounded and certified. Preparation produces native statistics and candidate tables that can be reused across allocation settings. Changing $\tau$ requires only contrast calibration and cost construction, while changing the payload ceiling reuses these quantities as well.

\begin{algorithm*}[t]
\caption{NatScale: contrast-calibrated precision allocation}
\label{ns:algorithm}
\begin{algorithmic}[1]
\Require Native checkpoint, weighted calibration folds, candidate catalogs $\mathcal C_i$, $\tau>0$, payload ceiling $B$, optional parent map
\Ensure Static configurations $\widehat c$, payload, normalized objective and LP gap
\State Collect native gradient moments and exposure $f_{s,i}$; normalize the moments to $g_i$.
\State Solve $\mathcal J(q(\beta_\tau),u)=\tau$ on the nonconstant positive support; set $w_i=g_i^{\beta_\tau}$ there and zero elsewhere.
\State Replay fixed native inputs and cache joint $M_{s,i}(c)$ via the moment identity in \equref{ns:gram}.
\State Compute structural factors $A_i(c)$ and actual payload costs $C_i(c)$.
\State Form $d_i(c)=w_iA_i(c)\sum_s\pi_sf_{s,i}\sqrt{M_{s,i}(c)}$, setting zero-exposure terms to zero.
\State Check $\sum_i\min_c C_i(c)\le B$ and finite costs; normalize using \equref{ns:solver-cost}.
\State Build lower hulls; solve the budgeted LP and round to a feasible map as described below.
\State Keep a feasible parent map from the same catalog if its objective is smaller.
\State Verify $\sum_i C_i(\widehat c_i)\le B$; return the map, payload, objective and gap $\widetilde J(\widehat c)-\widetilde J_B^{\mathrm{LP}}$.
\end{algorithmic}
\end{algorithm*}

\paragraph{Cost normalization.}
We normalize complete cost rows using their minima, $m_i=\min_{c\in\mathcal C_i}d_i(c)$, and the global scale $S=\sum_{i\in\cI}\max_{c\in\mathcal C_i}d_i(c)$:
\begin{equation}
 \widetilde d_i(c)=\frac{d_i(c)-m_i}{S}.
 \label{ns:solver-cost}
\end{equation}
The scale must be finite and positive, and degenerate scores are rejected. Because a complete allocation selects one configuration per expert, subtracting the row minima shifts every allocation's cost by the same constant. Dividing by the positive global scale also preserves the minimizer. Both operations therefore normalize the complete allocation cost $d_i$ without changing the selected configuration; they do not center local MSE before taking its square root.

\paragraph{Calibration and candidate conventions.}
We collect native statistics from 1,048,576 chat-templated tokens divided into six folds, two per source. Gradients are measured at the down-projection output using summed next-token cross-entropy and are shared across candidates. The candidate catalogs then provide $14^3=2744$ complete configurations per expert for each Qwen model and $11^3=1331$ for DeepSeek-V4-Flash-0731, including native MXFP4.

To evaluate these candidates, we fix the inputs and routes collected by a separate native-checkpoint replay. Exposure uses $n^{\mathrm{orig}}_{s,i}/T_s$, while MSE divides the replay error sum by $H_i\max(n^{\mathrm{rep}}_{s,i},1)$. An empty replay event set therefore stores zero MSE, including when the original collection recorded positive exposure. The collector checks total route counts at the fold level; these totals can agree even when expert-level counts differ.

\paragraph{Numerical expert operator.}
Candidate evaluation follows the numerical operator of each checkpoint. Let $R_B$ denote BF16 rounding, and let the gate and up linear maps return BF16 values $a,b$. For both Qwen models, the cached activation is $R_B(R_B(\operatorname{SiLU}(a))\odot b)$. DeepSeek-V4-Flash-0731 first computes
\begin{equation}
 z=R_B\!\left(\operatorname{SiLU}(\min(a,10))
 \odot\operatorname{clip}(b,-10,10)\right),
\end{equation}
with clipping, SiLU, and multiplication in FP32 before the displayed rounding. For each 128-channel block $z_B$, its FP8 scale and reconstructed activation are
\begin{align}
 &s_B=2^{\left\lceil\log_2(\max(\|z_B\|_\infty,10^{-4})/448)\right\rceil},\nonumber\\
 &Q_8(z_B)=\nonumber\\
 &R_B\!\left(s_B R_{\mathrm{E4M3FN}}
 \bigl(\operatorname{clip}(z_B/s_B,-448,448)\bigr)\right).
\end{align}
We apply the same operator to native and candidate gate/up pairs. The moment identity in Equation~\ref{ns:gram}, derived below, scores their final down-projection matrix maps without an additional output BF16 rounding. Its cross term therefore retains both quantized activations and down-weight errors under the specified local arithmetic.

For the structural factor, our AlphaQ adaptation estimates spectra on complete $128\times128$ weight blocks, fits the largest 10\% of their spectra, and uses spectral exponent one. The variance term is the population weight variance. The precision factor uses nominal bits for NINT integer formats and effective bpw for NVQ vector formats, excluding the four-byte expert descriptor, while budget accounting uses actual candidate payload. Common positive score normalizers leave the minimizer unchanged. The adaptation uses no K-FAC attribution, projection-specific budget cap, or worst-source aggregation.

\paragraph{Moment evaluation.}
To construct the joint-error tables used in Section~\ref{ns:distortion}, we reuse activation and weight moments across candidate configurations. For one expert and fold with $n=n^{\mathrm{rep}}_{s,i}>0$ replay events, let $Z_{ab}\in\mathbb R^{n\times m}$ be gate/up activations and $D_c\in\mathbb R^{H\times m}$ a down matrix. Using checkpoint values $Z_0,D_0$ on the same replay inputs, define $\Delta Z_{ab}=Z_{ab}-Z_0$ and $\Delta D_c=D_c-D_0$. Expanding the residual $\Delta Z_{ab}D_c^{\mathsf T}+Z_0\Delta D_c^{\mathsf T}$ gives
\begin{align}
 nH M(a,b,c)={}&\langle\Delta Z_{ab}^{\mathsf T}\Delta Z_{ab},D_c^{\mathsf T}D_c\rangle_F\nonumber\\
 &+2\langle\Delta Z_{ab}^{\mathsf T}Z_0,D_c^{\mathsf T}\Delta D_c\rangle_F\nonumber\\
 &+\langle Z_0^{\mathsf T}Z_0,\Delta D_c^{\mathsf T}\Delta D_c\rangle_F.
 \label{ns:gram}
\end{align}
The identity follows from trace cyclicity and retains all off-diagonal moments and the cross term. Because the gate/up-pair and down-candidate moments are reused across triples, each configuration can be scored without replaying every token. The identity is exact for the specified local operator; we check finite-precision accumulation against direct evaluation.

This reuse changes the computational cost. With $Q$ candidates per projection and $n_{\mathrm{tot}}$ replay events, direct replay costs $O(Q^3n_{\mathrm{tot}}mH)$. For $F$ folds, evaluation with reused moments costs
\begin{equation}
 \begin{aligned}
 &O(Qn_{\mathrm{tot}}Hm)+O(Q^2n_{\mathrm{tot}}m^2)\\
 &\qquad+O(QHm^2)+O(FQ^3m^2).
 \end{aligned}
\end{equation}
The first terms account for constructing outputs and moments, while the final contractions are independent of token count. Across six retained estimates, the results agree with direct evaluation within a relative discrepancy of $4.40\times10^{-6}$. We cache one MSE for each fold, expert, and configuration for subsequent allocation.

\paragraph{LP relaxation and certificate.}
Once the candidate costs are available, we obtain a lower bound by relaxing Equation~\ref{ns:allocation} to mixtures $x_{i,c}\ge0$, $\sum_cx_{i,c}=1$, under the same budget. This LP~\citep{utboyd2004} contains every discrete allocation, so its optimum $J_B^{\mathrm{LP}}$ bounds the discrete optimum $J_B^\star$ from below. Every feasible returned configuration $\widehat c$ therefore satisfies
\begin{equation}
 0\le J(\widehat c)-J_B^\star\le J(\widehat c)-J_B^{\mathrm{LP}}.
 \label{ns:gap}
\end{equation}
The resulting certificate bounds optimization error for the fixed proxy objective. To express it in normalized cost units, let $\widetilde J(c)=\sum_i\widetilde d_i(c_i)$. Then
\begin{align}
 J(c)&=S\widetilde J(c)+\sum_i m_i,\nonumber\\
 J(\widehat c)-J_B^{\mathrm{LP}}
 &=S\bigl[\widetilde J(\widehat c)-\widetilde J_B^{\mathrm{LP}}\bigr].
 \label{ns:objective-scale}
\end{align}
Thus, the row offsets preserve both optimizers, and $S$ converts the normalized gap back to raw units.

\paragraph{Deterministic boundary rounding.}
To construct the LP solution and a feasible discrete allocation, the solver first orders each expert's candidates by ascending payload and then loss, preserving the original order for exact ties. It removes dominated points and constructs the lower convex hull, whose edges represent objective reductions obtained by spending additional bytes. The solver sorts these edges by decreasing reduction per byte. Since the hull slopes are ordered, this sorting preserves the required order of upgrades within each expert. The LP consumes complete edges until the first boundary edge exceeds the remaining budget, then takes the affordable fraction of that edge. If all upgrades fit, the endpoint allocation is already integral.

When the boundary is fractional, rounding starts from its lower-payload endpoint. The solver skips the boundary edge and scans the remaining edges in the same stable order, accepting an upgrade only when the current expert configuration is its preceding endpoint and the additional bytes fit. Each accepted move therefore preserves feasibility. If a feasible parent configuration from the same catalog has a strictly smaller objective, the solver retains it. The procedure returns a feasible map together with its measured gap to the LP optimum.

\FloatBarrier
\section{Evaluation Setup}
\label{app:baselines}

To complement the comparison in \tabref{tab:baselines-main}, this appendix describes the model architectures, shared evaluation protocol, and baseline adaptations for Qwen3.6-35B-A3B, Qwen3.8-Flash-Next, and DeepSeek-V4-Flash-0731.

\paragraph{Model scale and architecture.}
The three models span increasing reported total and activated parameter counts: 35B/3B for Qwen3.6-35B-A3B, 125B/6B for Qwen3.8-Flash-Next, and 284B/13B for DeepSeek-V4-Flash-0731~\citep{qwen2026qwen36,qwen2026flashnext,deepseek2026v4,deepseek2026flash0731}. Qwen3.6-35B-A3B has 40 layers and 256 routed experts per layer, activating eight plus one shared expert. Qwen3.8-Flash-Next has 48 layers and 512 routed experts, activating ten plus one shared expert. DeepSeek-V4-Flash-0731 has 43 layers and 256 routed experts, activating six plus one shared expert.

These reported parameter counts follow each model's specification. For Qwen3.8-Flash-Next, the reported 125B excludes the additional 51B n-gram embedding parameters and 4B multi-token-prediction (MTP) parameters. In contrast, the official 284B Flash specification includes its MTP configuration, while the 0731 checkpoint also contains the DSpark speculative-decoding module. Reported specifications, complete checkpoint parameter counts, and routed-only byte budgets therefore describe different scopes.

\paragraph{Calibration.}

For each model, we use a frozen 1,048,576-token calibration pool rendered with its native chat template and assign equal weight to the six folds. Across NatScale score variants, we reuse the native gradient second moments, routing frequencies, candidate weights, and complete joint reconstruction errors. Changing the threshold or budget therefore requires no new collection of statistics.

\paragraph{Shared candidate formats.}
All methods use candidates produced by the MixedFlowQuant (MFQ) backend. Its integer and vector-quantization formats are denoted by NINT and NVQ, respectively, with suffixes identifying the fixed candidate encodings. Qwen3.6-35B-A3B and Qwen3.8-Flash-Next use NINT2, NINT3, NINT4, NINT5, NINT6, NINT8, NVQ2J, NVQ2J-L, NVQ2J-XL, NVQ3J, NVQ3J-512, NVQ3J-L, NVQ1-S, and NVQ1-L. DeepSeek-V4-Flash-0731 uses NINT2, NINT3, the six NVQ2J/NVQ3J formats, native MXFP4, NVQ1-S, and NVQ1-L. The resulting catalogs contain 14, 14, and 11 entries, respectively. DeepSeek-V4-Flash-0731 is not assigned NINT4/5/6/8 candidates above its native MXFP4 endpoint. Within these catalogs, \method{} evaluates complete mixed gate/up/down configurations while keeping non-routed weights native.

To compare allocation methods over the same candidates, the five external baselines share these weights and the native evaluation protocol. Each baseline retains its own scoring rule and allocation constraints; the adaptations are described below in the order used in the result tables.

\paragraph{MxMoE.}
MxMoE assigns precision to linear blocks using their reconstruction sensitivity~\citep{duanmu2025mxmoe}. Our adaptation measures layer-output L2 error after a single-projection perturbation with $r=1$, ties the gate and up formats within each expert, and uses a global storage budget. Hadamard transforms and GPTQ are disabled.

\paragraph{FRI-MxMoE.}
FRI-MxMoE uses fuzzy rule interpolation to estimate reconstruction costs from a subset of profiled experts~\citep{qi2026frimxmoe}. Our adaptation uses measured expert anchors to interpolate costs across the shared candidate formats, then allocates precision under their encoded-byte costs.

\paragraph{AlphaQ.}
AlphaQ derives its allocation scores from weight spectra without calibration text~\citep{yang2026alphaq}. We adapt this criterion to the shared candidate formats by combining the spectral factor, weight variance, and bit-decay factor in Equation~\ref{ns:alphaq}. The shared candidates do not undergo the subsequent GPTQ reconstruction used in the original pipeline. The adaptation uses the released code's default spectral exponent of one,\footnote{\href{https://github.com/Superone77/AlphaQ/blob/3624976cfd800034156d4a39a3e5c04d23a02291/main.py\#L718}{AlphaQ code, commit \texttt{3624976c}}.} which differs from the model-dependent curvature rule given in the paper.

With this exponent, the adapted structural factor for projection $p$ is
\begin{equation}
 \begin{aligned}
 A_{i,p}(c_p)&=\chi_{i,p}(c_p)\frac{\widetilde\alpha}{\alpha_{i,p}}\\
 &\qquad\cdot\operatorname{Var}(W_{i,p})\,2^{-2b_{i,p}(c_p)}.
 \end{aligned}
 \label{ns:alphaq}
\end{equation}
Here $\alpha_{i,p}$ is the spectral tail index and $\widetilde\alpha$ is its median across all target projections. The precision $b_{i,p}(c_p)$ uses nominal bits for integer formats and effective bits per weight for vector formats. The indicator $\chi$ is zero only for DeepSeek-V4-Flash-0731's native MXFP4 endpoint and one otherwise. To obtain an expert-level structural factor, NatScale averages the three projection factors equally, $A_i(c)=\frac13\sum_{p\in\{G,U,D\}}A_{i,p}(c_p)$.

\paragraph{MODE.}
MODE decomposes expert importance by modality for multimodal models~\citep{chen2026mode}. Our text-only adaptation combines source-dependent exposure with expert-output reconstruction error, using the calibration text sources in place of modalities. It assigns the same candidate format to gate, up and down within each expert.

\paragraph{GEMQ (allocation-only).}
The label GEMQ (alloc.) in tables and figures denotes the allocation-only adaptation of GEMQ~\citep{deng2026gemq}. It uses the criterion in that paper's Equation~(6), combining per-token squared output gradients with whole-expert perturbations. Under this adaptation, gate, up and down share one candidate format per expert, and allocation uses encoded bytes while retaining the high-precision expert coverage constraints. GPTQ, router fine-tuning, and progressive quantization are excluded from the adapted pipeline.

To compute the gradient criterion, the evaluation protocol specifies all 512 calibration sequences with their loss masks and reuses native forward/backward statistics across budgets. We differentiate the per-sequence mean loss, with routing weights included in either the squared-gradient coefficients or the routed perturbation. The current Qwen3.8-Flash-Next and DeepSeek-V4-Flash-0731 measurements use the gradient-corrected reruns published on 2026-10-09; Qwen3.6-35B-A3B is unchanged.

\paragraph{Auxiliary weighting controls.}
Candidate fitting can also use an importance matrix (imatrix), which supplies activation-based weights. The routed-only experiments omit this weighting, whereas the whole-model extension uses the archived UD importance matrix described in \appref{app:whole-model}.

\paragraph{Native evaluation.}
We evaluate the resulting quantized models on WT2 test text without a chat template. Each window contains 512 tokens, with logits 256--510 providing 255 scored next-token predictions. The Qwen3.6-35B-A3B and Qwen3.8-Flash-Next streams contain 580 windows each, while DeepSeek-V4-Flash-0731 contains 573. Within a model, all methods use the same token stream, scored positions, and native reference. Using this common reference, KLD measures divergence from the native distribution to the quantized distribution, and Top1 measures native argmax agreement; PPL instead pools the true next-token negative log-likelihood. All three metrics are computed from complete quantized forward passes, which retain routing changes and accumulated errors.

\paragraph{Storage conventions.}
Storage budgets in these comparisons cover routed-expert weights, and the routed-bpw targets are nominal. NatScale constrains candidate payload and reports headers separately, whereas the adapted baselines reserve header bytes within their targets. For the NatScale threshold and component comparisons, we hold the payload ceiling fixed.

\FloatBarrier
\section{Precision Allocation}
\label{app:allocation-comparison}
\label{sec:precision-allocation}

The allocation patterns discussed in \secref{sec:ablation-studies} complement the quality results in \tabref{tab:baselines-main} by showing where each method assigns storage. Figures~\ref{fig:alloc-qwen-2p5}--\ref{fig:alloc-dsv-2p5} compare NatScale with MxMoE, FRI-MxMoE, AlphaQ and MODE at 2.5 nominal routed bpw. The three model figures use a common method order and color scale, while retaining each model's original expert order.

These figures present the allocations at three levels. The heatmaps show each expert's stored bits, including overhead, divided by its combined gate/up/down parameter count. To account for this overhead, tensor headers are shared equally among a layer's experts, while format and codebook headers are shared among experts using that format. The cumulative distributions then summarize storage across all experts. Finally, the layer curves show how storage varies with depth by dividing each layer's total expert storage by its routed parameter count. NatScale's variation in precision within layers is visible in the heatmaps, while the cumulative distributions show the overall spread of expert storage.

\begin{figure*}[!t]
\centering
\includegraphics[width=\linewidth]{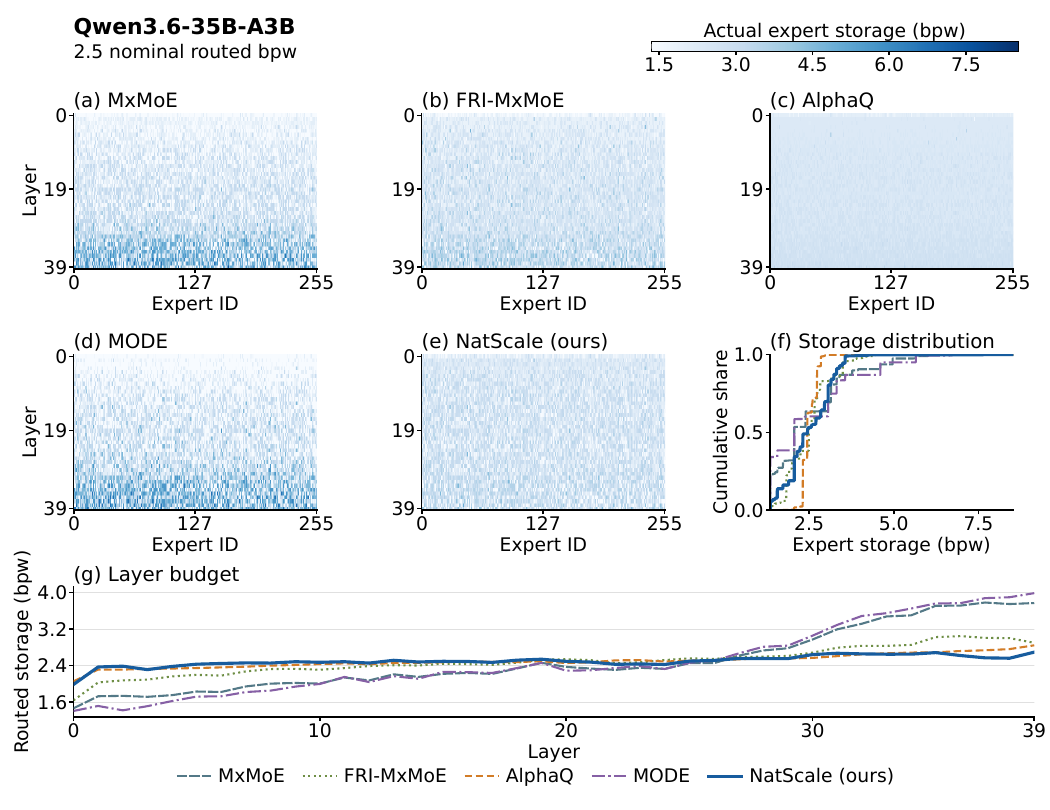}
\caption{Expert precision and layer budgets for Qwen3.6-35B-A3B at nominal 2.5 routed bpw. Panels (a)--(e) show actual expert storage; (f) gives its cumulative distribution; (g) shows routed storage by layer. The five methods use the same order, storage definition and color scale across the three model figures.}
\label{fig:alloc-qwen-2p5}
\end{figure*}

\begin{figure*}[!t]
\centering
\includegraphics[width=\linewidth]{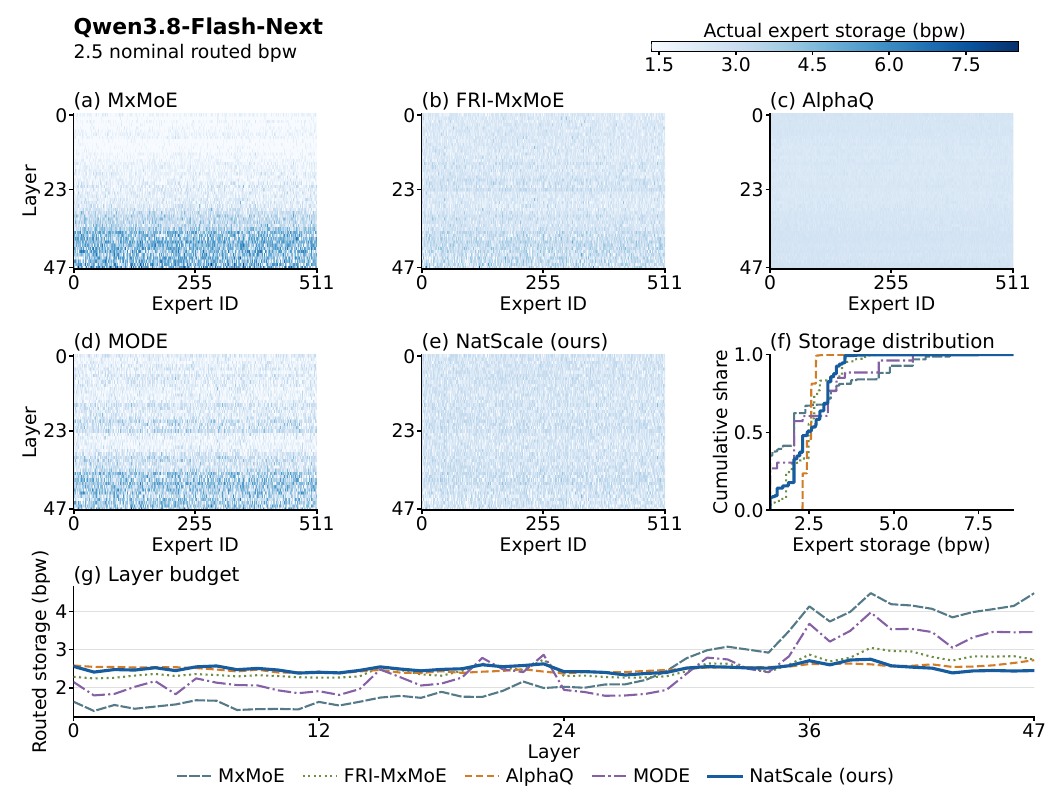}
\caption{Expert precision and layer budgets for Qwen3.8-Flash-Next at nominal 2.5 routed bpw. Heatmaps, cumulative storage distributions and layer curves follow the definitions and shared scales in Figure~\ref{fig:alloc-qwen-2p5}. NatScale uses the joint-error default.}
\label{fig:alloc-flash-2p5}
\end{figure*}

\begin{figure*}[!t]
\centering
\includegraphics[width=\linewidth]{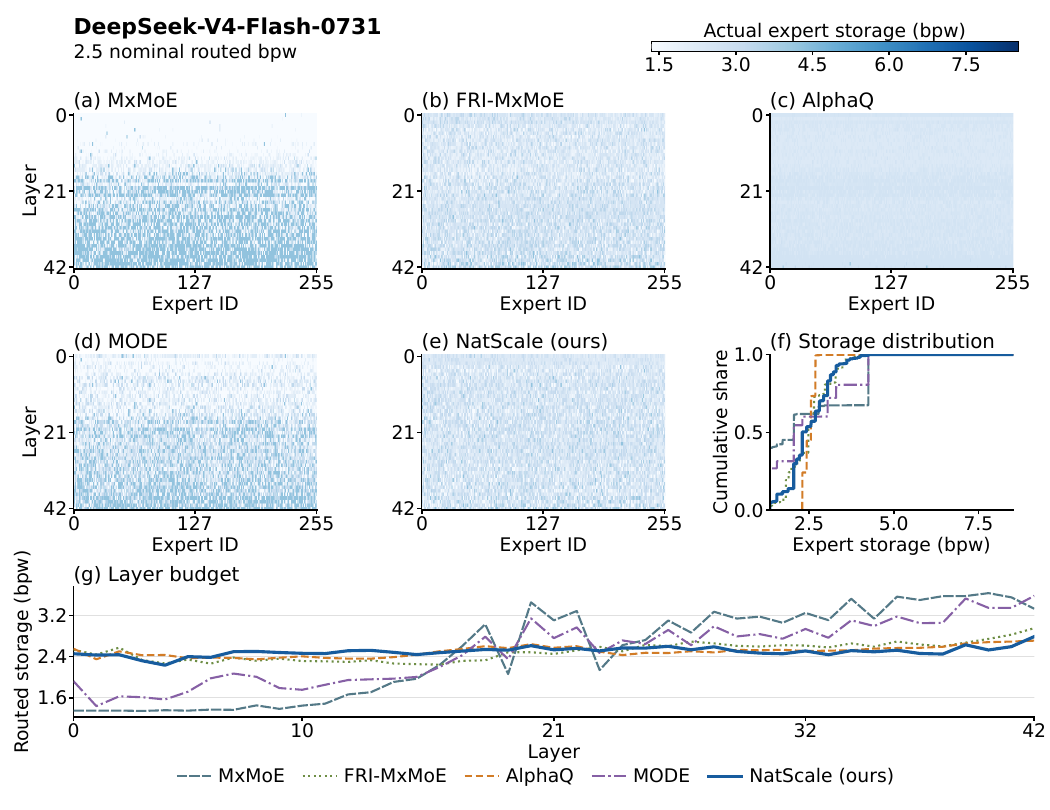}
\caption{Expert precision and layer budgets for DeepSeek-V4-Flash-0731 at nominal 2.5 routed bpw. All 43 layers and 256 routed experts per layer are shown in their original order. Heatmaps, cumulative distributions and layer curves use the same definitions and scales as the two Qwen figures.}
\label{fig:alloc-dsv-2p5}
\end{figure*}

\section{Design Ablations}
\label{app:current-ablation}

This appendix provides the protocols and complete results behind the design comparisons in \secref{sec:ablation-studies}. We first vary the contrast target and score factors across three models and budgets, then compare joint and separate projection errors on Qwen3.8-Flash-Next. Throughout these comparisons, calibration statistics and candidate weights remain fixed, as does each setting's exact payload ceiling.

\subsection{Contrast Targets}
\label{ns:threshold-appendix}

We begin with the calibration rule in \equref{ns:calibration}, varying its contrast target around the one-nat reference derived in \secref{sec:theory}. For each model, we solve the Jeffreys calibration condition at five log-spaced targets $\tau\in\{0.5,\sqrt{0.5},1,\sqrt{2},2\}$ nats on the positive-gradient support, using natural logarithms and no factor of $1/2$. The resulting exponent enters the allocation score through $g^{\beta_\tau}$, whereas the normalized distribution used in the divergence is proportional to $g^{2\beta_\tau}$. At each target, one exponent is shared across layers and budgets, and zero-gradient experts retain zero weight. For the one-nat setting, we use the saved Weibull-MLE exponent exactly.

This comparison changes only the gradient exponent: raw joint RMS, the structural factor that equally weights the three AlphaQ projection factors, fold frequencies, candidate weights, and payload ceilings remain fixed. In the table, the labels 0.7071 and 1.4142 denote the exact midpoints $\sqrt{0.5}$ and $\sqrt{2}$. \tabref{ns:tau-results} provides the KLD and Top1 values underlying Figure~\ref{ns:tau-curves} and additionally reports PPL, showing how preferences for contrast differ between likelihood and native-output fidelity.

\begin{table*}[!t]
\caption{All 45 threshold settings, including the nine defaults. Top1 is native-reference agreement (\%). Bold marks each metric's best value among the five contrasts within each model and budget.}
\label{ns:tau-results}
\centering\small
\begingroup
\setlength{\tabcolsep}{3pt}
\begin{tabular*}{\linewidth}{@{\extracolsep{\fill}}lrrrrr@{}}
\toprule
Model & Target bpw & $\tau$ (nat) & KLD $\downarrow$ & Top1 (\%) $\uparrow$ & PPL $\downarrow$ \\
\midrule
\shortstack[l]{Qwen3.6-\\35B-A3B} & 2.0 & 0.5 & 0.202861 & 82.8668 & 7.703205 \\
 &  & 0.7071 & 0.201441 & 82.9946 & 7.682874 \\
 &  & 1 & 0.198816 & 82.9824 & 7.676734 \\
 &  & 1.4142 & 0.196132 & \textbf{83.1650} & \textbf{7.676333} \\
 &  & 2 & \textbf{0.195436} & 83.0852 & 7.689436 \\
\addlinespace[2pt]
 & 2.5 & 0.5 & 0.100756 & 87.7958 & \textbf{7.125175} \\
 &  & 0.7071 & 0.099959 & 87.8188 & 7.154521 \\
 &  & 1 & \textbf{0.098355} & \textbf{87.9378} & 7.150688 \\
 &  & 1.4142 & 0.099830 & 87.8431 & 7.204622 \\
 &  & 2 & 0.100853 & 87.8580 & 7.248561 \\
\addlinespace[2pt]
 & 3.0 & 0.5 & 0.058600 & 90.6437 & 7.037591 \\
 &  & 0.7071 & 0.057835 & 90.6876 & 7.025139 \\
 &  & 1 & 0.057493 & 90.7228 & \textbf{7.014946} \\
 &  & 1.4142 & 0.057376 & \textbf{90.7897} & 7.022438 \\
 &  & 2 & \textbf{0.057317} & 90.7674 & 7.042827 \\
\addlinespace[2pt]
\midrule
\shortstack[l]{Qwen3.8-\\Flash-Next} & 2.0 & 0.5 & 0.345286 & 80.1237 & 5.079686 \\
 &  & 0.7071 & 0.337696 & \textbf{80.3198} & 5.074160 \\
 &  & 1 & \textbf{0.329276} & 80.2170 & \textbf{5.067797} \\
 &  & 1.4142 & 0.336225 & 80.1034 & 5.073389 \\
 &  & 2 & 0.340955 & 80.0169 & 5.086687 \\
\addlinespace[2pt]
 & 2.5 & 0.5 & 0.201418 & 84.7796 & 4.823743 \\
 &  & 0.7071 & 0.197665 & 84.8621 & \textbf{4.822990} \\
 &  & 1 & \textbf{0.194086} & 84.8573 & 4.834395 \\
 &  & 1.4142 & 0.194985 & \textbf{84.8763} & 4.851031 \\
 &  & 2 & 0.196514 & 84.6606 & 4.878319 \\
\addlinespace[2pt]
 & 3.0 & 0.5 & 0.127064 & 87.6876 & \textbf{4.736719} \\
 &  & 0.7071 & 0.125395 & 87.7735 & 4.740789 \\
 &  & 1 & 0.124340 & 87.6714 & 4.743009 \\
 &  & 1.4142 & 0.121826 & 87.7437 & 4.749630 \\
 &  & 2 & \textbf{0.121556} & \textbf{87.7830} & 4.748906 \\
\addlinespace[2pt]
\midrule
\shortstack[l]{DeepSeek-V4-\\Flash-0731} & 2.0 & 0.5 & 0.765042 & 71.3554 & 8.648208 \\
 &  & 0.7071 & 0.751667 & 71.6217 & 8.579417 \\
 &  & 1 & 0.729311 & 72.1534 & 8.364731 \\
 &  & 1.4142 & 0.701516 & 72.6845 & 8.144471 \\
 &  & 2 & \textbf{0.678030} & \textbf{73.1780} & \textbf{7.925372} \\
\addlinespace[2pt]
 & 2.5 & 0.5 & 0.346255 & 81.1532 & 5.978910 \\
 &  & 0.7071 & 0.343613 & 81.2730 & 6.008364 \\
 &  & 1 & 0.336141 & 81.5714 & 5.961485 \\
 &  & 1.4142 & 0.332286 & 81.6555 & \textbf{5.939674} \\
 &  & 2 & \textbf{0.331466} & \textbf{81.7363} & 5.967101 \\
\addlinespace[2pt]
 & 3.0 & 0.5 & 0.193404 & 86.1705 & \textbf{5.473308} \\
 &  & 0.7071 & \textbf{0.192386} & 86.2376 & 5.477620 \\
 &  & 1 & 0.193588 & 86.1609 & 5.488280 \\
 &  & 1.4142 & 0.194685 & \textbf{86.2430} & 5.492832 \\
 &  & 2 & 0.196102 & 86.1883 & 5.509159 \\
\addlinespace[2pt]
\bottomrule
\end{tabular*}
\endgroup

\end{table*}

\subsection{Component Ablations}
\label{ns:full-component-results}

Having varied the contrast target, we next examine the factors combined in \equref{ns:complete-cost} through five core removals from the default, which uses joint raw RMS and $\tau=1$. Removing local RMS error sets the error factor to one. When bit decay is removed, the spectral and variance factors remain, and DeepSeek-V4-Flash-0731's native MXFP4 endpoint retains zero added error.

The remaining three removals concern expert weighting. Removing gradient weights assigns every expert a weight of one, including experts with zero native gradient; removing gradient smoothing instead sets $\beta=1$. Finally, removing exposure magnitude replaces the frequency $f_{s,i}$ with $\mathbf{1}[f_{s,i}>0]$, while preserving the routing dependence of the gradient statistics.

To examine the structural factors and error power in more detail, we introduce four additional controls for Qwen3.8-Flash-Next. The MSE variant also removes bit decay and therefore differs from the default in two factors. Comparing it with the RMS variant without bit decay isolates the effect of error power under that condition. The other three controls retain bit decay while removing the spectral factor, the variance factor, or both. Tables~\ref{tab:natscale-ablation-qwen}--\ref{tab:natscale-ablation-dsv} report 66 component results across KLD, Top1 and PPL.

\begin{table*}[!tp]
\caption{NatScale component variants for Qwen3.6-35B-A3B under fixed candidate weights and payload ceilings. Top1 is native agreement (\%). Bold marks the best listed variant within each budget and metric; the joint-error default appears last.}
\label{tab:natscale-ablation-qwen}
\centering\small
\begingroup
\setlength{\tabcolsep}{4pt}
\begin{tabular*}{\linewidth}{@{\extracolsep{\fill}}rlrrr@{}}
\toprule
bpw & Variant & KLD $\downarrow$ & Top1 (\%) $\uparrow$ & PPL $\downarrow$ \\
\midrule
2.0 & NatScale w/o local RMS error & 0.243853 & 81.1204 & 8.156114 \\
2.0 & NatScale w/o bit decay & 0.310342 & 79.1062 & 8.450841 \\
2.0 & NatScale w/o gradient weights & 0.222994 & 81.9155 & 7.854326 \\
2.0 & NatScale w/o gradient smoothing & 0.245525 & 81.0257 & 8.210989 \\
2.0 & NatScale w/o exposure magnitude & \textbf{0.194500} & 82.4719 & 7.719427 \\
\cmidrule(lr){1-5}
2.0 & NatScale (ours, $\tau=1$) & 0.198816 & \textbf{82.9824} & \textbf{7.676734} \\
\midrule
2.5 & NatScale w/o local RMS error & 0.125210 & 86.9506 & 7.445417 \\
2.5 & NatScale w/o bit decay & 0.184559 & 84.4388 & 7.676710 \\
2.5 & NatScale w/o gradient weights & 0.105059 & 87.5808 & 7.185638 \\
2.5 & NatScale w/o gradient smoothing & 0.131737 & 86.2691 & 7.474272 \\
2.5 & NatScale w/o exposure magnitude & 0.106081 & 86.8796 & 7.207796 \\
\cmidrule(lr){1-5}
2.5 & NatScale (ours, $\tau=1$) & \textbf{0.098355} & \textbf{87.9378} & \textbf{7.150688} \\
\midrule
3.0 & NatScale w/o local RMS error & 0.068296 & 90.0609 & 7.131690 \\
3.0 & NatScale w/o bit decay & 0.121408 & 87.6362 & 7.296315 \\
3.0 & NatScale w/o gradient weights & 0.061691 & 90.4320 & 7.016817 \\
3.0 & NatScale w/o gradient smoothing & 0.075514 & 89.5612 & 7.201615 \\
3.0 & NatScale w/o exposure magnitude & 0.059845 & 90.1657 & \textbf{7.014259} \\
\cmidrule(lr){1-5}
3.0 & NatScale (ours, $\tau=1$) & \textbf{0.057493} & \textbf{90.7228} & 7.014946 \\
\bottomrule
\end{tabular*}
\endgroup

\end{table*}
\begin{table*}[!tp]
\caption{NatScale component variants for Qwen3.8-Flash-Next, including error-power and spectral/variance controls. Bold marks the best variant within each budget and metric.}
\label{tab:natscale-ablation-flash}
\centering\small
\begingroup
\setlength{\tabcolsep}{4pt}
\begin{tabular*}{\linewidth}{@{\extracolsep{\fill}}rlrrr@{}}
\toprule
bpw & Variant & KLD $\downarrow$ & Top1 (\%) $\uparrow$ & PPL $\downarrow$ \\
\midrule
2.0 & NatScale w/o local RMS error & 0.378134 & 78.5598 & 5.302109 \\
2.0 & NatScale w/o bit decay & 0.456162 & 76.5707 & 5.582535 \\
2.0 & NatScale w/o gradient weights & 0.337510 & 79.8438 & 5.154556 \\
2.0 & NatScale w/o gradient smoothing & 0.492273 & 75.1075 & 5.816586 \\
2.0 & NatScale w/o exposure magnitude & 0.330178 & 80.1420 & \textbf{5.035539} \\
2.0 & NatScale w/ MSE, w/o bit decay & 0.389347 & 78.4050 & 5.372212 \\
2.0 & NatScale w/o spectral factor & \textbf{0.329165} & 80.1866 & 5.086408 \\
2.0 & NatScale w/o variance factor & 0.332328 & 80.1501 & 5.082415 \\
2.0 & NatScale w/o spectral and variance factors & 0.334005 & 80.0710 & 5.081566 \\
\cmidrule(lr){1-5}
2.0 & NatScale (ours, $\tau=1$) & 0.329276 & \textbf{80.2170} & 5.067797 \\
\midrule
2.5 & NatScale w/o local RMS error & 0.226839 & 83.6518 & 4.909631 \\
2.5 & NatScale w/o bit decay & 0.292266 & 81.6971 & 5.072827 \\
2.5 & NatScale w/o gradient weights & 0.202779 & 84.6815 & \textbf{4.802855} \\
2.5 & NatScale w/o gradient smoothing & 0.314415 & 80.3550 & 5.236243 \\
2.5 & NatScale w/o exposure magnitude & 0.223295 & 83.6180 & 4.858623 \\
2.5 & NatScale w/ MSE, w/o bit decay & 0.238135 & 83.3698 & 4.925901 \\
2.5 & NatScale w/o spectral factor & 0.196914 & \textbf{84.9601} & 4.829930 \\
2.5 & NatScale w/o variance factor & 0.199316 & 84.8431 & 4.830544 \\
2.5 & NatScale w/o spectral and variance factors & 0.197706 & 84.9020 & 4.835880 \\
\cmidrule(lr){1-5}
2.5 & NatScale (ours, $\tau=1$) & \textbf{0.194086} & 84.8573 & 4.834395 \\
\midrule
3.0 & NatScale w/o local RMS error & 0.138895 & 87.1481 & 4.822337 \\
3.0 & NatScale w/o bit decay & 0.202124 & 84.8722 & 4.902857 \\
3.0 & NatScale w/o gradient weights & 0.128369 & 87.6504 & \textbf{4.736839} \\
3.0 & NatScale w/o gradient smoothing & 0.197610 & 84.6646 & 4.956623 \\
3.0 & NatScale w/o exposure magnitude & 0.130632 & 87.3597 & 4.758108 \\
3.0 & NatScale w/ MSE, w/o bit decay & 0.138096 & 86.9229 & 4.825457 \\
3.0 & NatScale w/o spectral factor & \textbf{0.123582} & \textbf{87.7458} & 4.737910 \\
3.0 & NatScale w/o variance factor & 0.125899 & 87.7363 & 4.737174 \\
3.0 & NatScale w/o spectral and variance factors & 0.123974 & 87.6788 & 4.755737 \\
\cmidrule(lr){1-5}
3.0 & NatScale (ours, $\tau=1$) & 0.124340 & 87.6714 & 4.743009 \\
\bottomrule
\end{tabular*}
\endgroup

\end{table*}
\begin{table*}[!t]
\caption{NatScale component variants for DeepSeek-V4-Flash-0731. Bold marks the best variant within each budget and metric.}
\label{tab:natscale-ablation-dsv}
\centering\small
\begingroup
\setlength{\tabcolsep}{4pt}
\begin{tabular*}{\linewidth}{@{\extracolsep{\fill}}rlrrr@{}}
\toprule
bpw & Variant & KLD $\downarrow$ & Top1 (\%) $\uparrow$ & PPL $\downarrow$ \\
\midrule
2.0 & NatScale w/o local RMS error & 0.655823 & 73.9828 & 7.726484 \\
2.0 & NatScale w/o bit decay & 1.102132 & 65.5826 & 11.946395 \\
2.0 & NatScale w/o gradient weights & 0.912658 & 69.1134 & 10.175868 \\
2.0 & NatScale w/o gradient smoothing & \textbf{0.609329} & \textbf{74.8965} & \textbf{7.378263} \\
2.0 & NatScale w/o exposure magnitude & 0.787991 & 71.1932 & 8.884293 \\
\cmidrule(lr){1-5}
2.0 & NatScale (ours, $\tau=1$) & 0.729311 & 72.1534 & 8.364731 \\
\midrule
2.5 & NatScale w/o local RMS error & 0.391052 & 80.3449 & 6.339849 \\
2.5 & NatScale w/o bit decay & 0.639630 & 73.7433 & 7.553834 \\
2.5 & NatScale w/o gradient weights & 0.401876 & 79.5743 & 6.266410 \\
2.5 & NatScale w/o gradient smoothing & 0.367891 & 81.3003 & 6.207998 \\
2.5 & NatScale w/o exposure magnitude & 0.381792 & 80.2033 & 6.222373 \\
\cmidrule(lr){1-5}
2.5 & NatScale (ours, $\tau=1$) & \textbf{0.336141} & \textbf{81.5714} & \textbf{5.961485} \\
\midrule
3.0 & NatScale w/o local RMS error & 0.242478 & 84.7367 & 5.720269 \\
3.0 & NatScale w/o bit decay & 0.295438 & 82.9395 & 5.789639 \\
3.0 & NatScale w/o gradient weights & 0.198781 & 85.8331 & 5.530394 \\
3.0 & NatScale w/o gradient smoothing & 0.217450 & 85.6435 & 5.595068 \\
3.0 & NatScale w/o exposure magnitude & 0.211537 & 85.4875 & 5.545926 \\
\cmidrule(lr){1-5}
3.0 & NatScale (ours, $\tau=1$) & \textbf{0.193588} & \textbf{86.1609} & \textbf{5.488280} \\
\bottomrule
\end{tabular*}
\endgroup

\end{table*}

Across the nine model--budget settings, removing bit decay consistently worsens all three metrics. Gradient weights also improve native-output fidelity throughout: removing them worsens KLD and Top1 in every setting, although it improves PPL in two. By comparison, the effects of local RMS error and gradient smoothing depend on the setting. Removing either improves all three metrics for DeepSeek-V4-Flash-0731 at 2.0 bpw but worsens them in the other eight settings.

The exposure and spectral controls further distinguish the metrics. Removing exposure magnitude improves KLD once and PPL twice, while never improving Top1. On Qwen3.8-Flash-Next, removing the spectral factor improves all three metrics at 3.0 bpw, whereas at 2.5 bpw it trades worse KLD for better Top1 and PPL.

\subsection{Joint and Separate Errors}
\label{ns:joint-separate-error}

The joint-error design in \secref{ns:distortion} retains interactions among gate, up and down perturbations. To evaluate this choice in the ablation reported in \secref{sec:ablation-studies}, we replace the joint MSE in \equref{ns:local-error} with a sum of single-projection perturbation errors:
\begingroup
\begin{equation}
 \begin{aligned}
 M^{\mathrm{sep}}_{s,i}(c_G,c_U,c_D)
 ={}&M_{s,i}(c_G,0,0)\\
 &+M_{s,i}(0,c_U,0)\\
 &+M_{s,i}(0,0,c_D).
 \end{aligned}
 \label{ns:separate-error-ablation}
\end{equation}
\endgroup
Here $0$ denotes the native projection, used only as a measurement reference. Each term changes one projection while keeping the other two native. To form the separate-error score, we sum these MSEs before taking the square root, while retaining the calibrated gradients, structural factors, exposure weights, candidate weights and global payload ceilings. Both variants allow G/U/D to select different formats; joint scoring remains the default, and separate-error scoring serves as the control.

This change in measurement reduces the number of local-error entries from $Q^3$ to $3Q$ per expert and fold, or from 2744 to 42 for Qwen3.8-Flash-Next. It leaves the same $Q^3$ deployment configurations available to both scores. The joint implementation evaluates its entries by reusing matrix moments.

\begin{table*}[!t]
\caption{Joint versus separate projection errors on Qwen3.8-Flash-Next at the NatScale default. Both variants use raw RMS and $\tau=1$ under each setting's fixed payload ceiling. Top1 is native agreement (\%). Bold marks the better of the two scores within each row and metric, using unrounded values.}
\label{tab:natscale-joint-separate-default}
\centering\small
\begingroup
\setlength{\tabcolsep}{3pt}
\begin{tabular*}{\linewidth}{@{\extracolsep{\fill}}lrrrrrr@{}}
\toprule
Setting & \multicolumn{3}{c}{Joint error} & \multicolumn{3}{c}{Separate error} \\
\cmidrule(lr){2-4}\cmidrule(l){5-7}
 & KLD $\downarrow$ & Top1 (\%) $\uparrow$ & PPL $\downarrow$ & KLD $\downarrow$ & Top1 (\%) $\uparrow$ & PPL $\downarrow$ \\
\midrule
\multicolumn{7}{l}{\textbf{Qwen3.8-Flash-Next}}\\
2.0 bpw & 0.329276 & 80.2170 & \textbf{5.067797} & \textbf{0.326225} & \textbf{80.2569} & 5.073339 \\
2.5 bpw & \textbf{0.194086} & 84.8573 & 4.834395 & 0.195639 & \textbf{84.9175} & \textbf{4.831680} \\
3.0 bpw & 0.124340 & \textbf{87.6714} & 4.743009 & \textbf{0.123854} & 87.6423 & \textbf{4.741410} \\
\bottomrule
\end{tabular*}
\endgroup

\end{table*}

The resulting comparison on Qwen3.8-Flash-Next shows that separate scoring lowers KLD at 2.0 and 3.0 bpw and raises Top1 at 2.0 and 2.5 bpw. At 2.5 bpw, for example, it raises Top1 from 84.8573\% to 84.9175\% but increases KLD from 0.194086 to 0.195639. At 3.0 bpw, the metric preferences reverse: separate scoring has lower KLD, while joint scoring has higher Top1.

To assess how these error scores affect task performance, the downstream comparison in \tabref{tab:natscale-joint-separate-downstream} evaluates them on four tasks using Qwen3.8-Flash-Next at 2.5 routed bpw.

\section{Downstream Evaluation}
\label{app:downstream}

\secref{sec:quantization-performance} asks whether the gains in native-output fidelity carry over to downstream tasks. This appendix specifies the protocols for that comparison in \tabref{tab:natscale-downstream}, which evaluates NatScale and five allocation baselines on Qwen3.8-Flash-Next at nominal 2.5 routed bpw. Using the same model and storage target, the error-score ablation in \tabref{tab:natscale-joint-separate-downstream} further examines the choice of error estimate. The default NatScale configuration uses joint raw RMS and $\tau=1$, while the ablation replaces the joint estimate with the separate errors defined in \equref{ns:separate-error-ablation}.

\paragraph{Shared inference conditions.}
For both comparisons, we use a modified Benchi pipeline with thinking disabled. The pipeline reuses the fixed allocations and candidate weights from WT2 evaluation without task-specific refitting. Non-routed weights remain native and candidate fitting uses no imatrix. The five baselines use the shared-MFQ adaptations described in \appref{app:baselines}. For AIME and GPQA, evaluation uses the corrected chat-completions interface and native chat template; earlier completion-interface runs are excluded.

\paragraph{AIME 2025.}
For AIME, we generate four responses to each of 30 problems at temperature 1, top-$p$ 1 and top-$k$ 20, with a 32,768-token limit. MathArena~\citep{balunovic2025matharena} then scores mean accuracy across all generated responses. For joint-error NatScale on Qwen3.8-Flash-Next, we exclude the faulty run and use the corrected runs with seeds 22345 and 32345. Each corrected run provides two responses per problem, giving a combined score of 82/120 (68.33\%). The separate-error runs use seed 12345 to generate all four responses per problem.

\paragraph{GPQA Diamond.}
For GPQA, all methods are evaluated on 198 questions~\citep{rein2023gpqa} with greedy decoding and a 32,768-token limit. Each method skips the same long question, which is counted as incorrect. Thus, evaluation requires 197 requests but retains a denominator of 198. Refusals remain in the denominator.

\paragraph{TruthfulQA binary.}
We use the binary-2025 adaptation of TruthfulQA~\citep{lin2022truthfulqa}, which contains 790 questions. Responses are generated with greedy decoding and a 32-token limit, while seed 0 fixes the answer-option order. Accuracy is the fraction of questions answered correctly.

\paragraph{LiveCodeBench v6.}
We evaluate the 175 newly added v6 problems in \texttt{test6.jsonl}~\citep{jain2025livecodebench}, using one greedy response per problem, seed 0 and a 32,768-token limit. The generated responses are evaluated by a checker with a six-second timeout and 12 scoring processes; pass@1 uses the first valid grade for each problem. Where task-index correspondence requires repair, we restore it and generate only the missing answers. Every reported configuration therefore covers all 175 unique problems.

\paragraph{Joint and separate errors.}
\label{app:downstream-error-ablation}

With these protocols fixed, \tabref{tab:natscale-joint-separate-downstream} extends the WikiText-2 error comparison in \secref{sec:ablation-studies} to task performance. The separate-error scores for Qwen3.8-Flash-Next come from the 12 October export, while the joint-error scores are those reported in the main table. Both follow the AIME protocol described above, with the joint-error result taken from the corrected runs.

\begin{table*}[!t]
\caption{Downstream comparison of joint and separate-error NatScale on Qwen3.8-Flash-Next at nominal 2.5 routed bpw, with thinking disabled. Cells show percentages and correct counts over the full denominator. AIME is mean response accuracy on AIME 2025; GPQA is Diamond; TQA is TruthfulQA binary; LCB is LiveCodeBench v6 pass@1. Bold marks the higher point estimate on each task.}
\label{tab:natscale-joint-separate-downstream}
\centering\small
\begingroup
\setlength{\tabcolsep}{4pt}
\begin{tabular*}{\linewidth}{@{\extracolsep{\fill}}lrrrr@{}}
\toprule
Error score & AIME & GPQA & TQA & LCB \\
\midrule
\multicolumn{5}{@{}l}{\textbf{Qwen3.8-Flash-Next}} \\
NatScale w/ separate errors & 65.00\,(78/120) & 74.75\,(148/198) & 93.42\,(738/790) & \textbf{62.86}\,(110/175) \\
\cmidrule(lr){1-5}
NatScale (joint, ours) & \textbf{68.33}\,(82/120) & \textbf{78.28}\,(155/198) & \textbf{94.05}\,(743/790) & 61.14\,(107/175) \\
\bottomrule
\end{tabular*}
\endgroup

\end{table*}

Joint scoring achieves higher scores on AIME, GPQA and TruthfulQA for Qwen3.8-Flash-Next, whereas separate scoring performs better on LiveCodeBench. We retain joint scoring as the default because it preserves projection interactions.

\section{Whole-Model Allocation}
\label{app:whole-model}

The whole-model comparison on Qwen3.8-Flash-Next in \secref{sec:natscale-whole-model} tests whether NatScale's allocation rule remains effective when shared experts, attention and gated-residual projections also receive quantized formats. To define this extension, we first generalize the expert cost in \equref{ns:complete-cost} to these allocation units, then include fixed tensors and metadata in the global byte constraint. The gated-residual projections are archived as MHC, and the identifier \texttt{G1\_raw} denotes raw-RMS scoring under this whole-model protocol.

\subsection{Allocation Units and Scores}

To define allocation units for this extension, we follow the checkpoint's modular structure, which combines experts with Qwen Sparse Attention (QSA), Gated DeltaNet (GDN) and gated-residual modules~\citep{qwen2026flashnext}. The resulting set $\mathcal U$ contains 24,576 routed experts, 48 shared experts, 48 attention groups and 96 MHC groups, totaling 24,768 units and 74,304 projections. Each expert groups its gate/up/down projections. Attention groups contain Q/K/V/G/O in QSA layers or Q/K/V/Z/O in GDN layers, while MHC groups contain up/down. Within a unit, projections may receive different encodings from the 14-format catalog (\appref{app:baselines}). Across all three budgets, we reuse candidates fitted with the UD importance matrix~\citep{unsloth2026flashnext} and compute allocation statistics from our native calibration data.

We next score each unit's candidate configurations. For a unit $u$, let $P_u$ denote its projections and $c$ their joint configuration. Routed and shared experts retain the complete joint G/U/D MSE $M_{s,u}(c)$, including cross terms, measured on fixed native inputs. For attention and MHC, we instead sum the single-projection module-output MSEs $M_{s,u,p}(c_p)$, omitting cross terms between their projection perturbations. Averaging the AlphaQ structural factors with equal shares gives $\overline A_u(c)=|P_u|^{-1}\sum_{p\in P_u}A_{u,p}(c_p)$. The resulting score is
\begin{equation}
\begin{gathered}
 \widetilde M_{s,u}(c)=\\
 \begin{cases}
 M_{s,u}(c),&\text{routed/shared expert},\\
 \sum_{p\in P_u}M_{s,u,p}(c_p),&\text{attention/MHC},
 \end{cases}\\
 d_u(c)=\\g_u^\beta\overline A_u(c)
 \sum_{s=1}^{6}\pi_s f_{s,u}\sqrt{\max\{\widetilde M_{s,u}(c),0\}}.
\end{gathered}
\label{ns:whole-model-score}
\end{equation}
Here, $\pi_s=1/6$ gives equal weight to each fold, while $f_{s,u}$ is the native activation frequency for routed experts and one otherwise. The zero floor handles numerical roundoff without subtracting any candidate minimum. For attention and MHC, the single-projection MSEs are first added, then converted to RMS by taking the square root. As with the joint expert error, this square root is taken before averaging across folds.

To obtain the sensitivity weights in this score, we compute one conditional second moment per unit from native output gradients of the summed cross-entropy loss, then normalize these moments by a common mean over $\mathcal U$. Fitting a Weibull distribution to positive $\sqrt{g_u}$ gives $\widehat k=0.7956516957758913$ and the shared exponent $\beta=\widehat k/4=0.19891292394397284$.

The calibration data used for these statistics comprise 1,048,576 tokens: 220 Magpie, 234 UltraChat and 58 WritingPrompts sequences of length 2,048, rendered with the native chat template. Splitting each source by record-hash parity produces the six equally weighted folds used in the score. Each fold contributes all of its data to local error estimation. Neither the exponent nor candidate weights are refitted on WikiText-2.

\subsection{Storage Budget}

The scores above guide allocation under a whole-model storage constraint that also accounts for tensors with fixed encodings. Per-layer embedding (PLE) tables and token embeddings retain the UD encodings. The output head, routers, QSA indexer query/key, GDN alpha/beta and MHC inject projections retain fixed NINT8 encodings; normalization, convolution and the final gated-residual module retain their native configurations. These fixed tensors consume $B_{\mathrm{fixed}}=33{,}000{,}391{,}107$ bytes. After reserving a further $B_{\mathrm{reserve}}=36{,}436{,}992$ bytes for headers and codebooks, we solve
\begin{equation}
\begin{gathered}
 \min_{c_u\in\mathcal C_u}\;\sum_{u\in\mathcal U}d_u(c_u)\\
 \text{subject to}\;B_{\mathrm{fixed}}+B_{\mathrm{reserve}}\\
 +\sum_{u\in\mathcal U}C_u(c_u)\le B_{\mathrm{UD}}.
\end{gathered}
\label{ns:whole-model-budget}
\end{equation}
We solve this problem using lower-hull allocation and feasible discrete rounding (\secref{sec:allocation}), without imposing per-layer or projection-type ceilings.

\subsection{Evaluation and Results}

We evaluate the resulting whole-model allocations against four released GSQ-RCO checkpoints~\citep{istadaslab2026flashnextgsqrco}. GSQ-RCO combines Gumbel-Softmax scalar quantization~\citep{dadgarnia2026gsq} with Riemannian constrained optimization~\citep{helcig2026rco}. For this comparison, NatScale and GSQ-RCO share 580 frozen WikiText-2 contexts of 512 tokens, the native reference hidden states and output head, and 147,900 predictions at logits 256--510. Full-vocabulary KLD uses FP64 reduction without probability cutoffs; Top1 compares native and candidate argmaxes.

With these evaluation inputs and metrics fixed, each method runs on its corresponding backend. NatScale propagates decoded BF16 candidates through the native Transformers graph with its fixed NINT8 head and recomputed routes. GSQ-RCO retains the released GGUF weights and \texttt{llama.cpp} revision \texttt{3cf03257}, with lossless reference-logit transfer and full-vocabulary metrics.

Alongside these locally evaluated results, \tabref{tab:natscale-whole-main} includes UD scores reported by Unsloth~\citep{unsloth2026flashnext}.

\FloatBarrier
\section*{AI use statement}
Generative AI assisted language editing, literature discovery, \LaTeX{} formatting, and analysis-code review. The authors verified sources, code, and numerical claims, reviewed the manuscript, and take responsibility for all content and artifacts.

\end{document}